\newcommand{\A}{{\cal A}}
\newcommand{\C}{{\cal C}}
\newcommand{\D}{{\cal D}}
\renewcommand{\H}{{\cal H}}
\renewcommand{\S}{{\cal S}}
\newcommand{\X}{{\cal X}}
\newcommand{\er}{\textup{er}}
\newcommand{\card}[1] {\left\vert #1 \right\vert}
\newcommand{\set}[1] {\left\{ #1 \right\}}
\newcommand{\norm}[1] {\left\| #1 \right\|}
\newcommand{\grad}{\nabla}
\newcommand{\N}{\mathbb{N}}
\newcommand{\eps}{\varepsilon}
\DeclareMathOperator*{\E}{\textnormal{\bf E}}
\let\Pr\relax
\DeclareMathOperator*{\Pr}{\textnormal{\bf Pr}}
\newtheorem*{definition}{Definition}
\newtheorem{theorem}{Theorem}
\newtheorem*{theorem*}{Theorem}
\newtheorem{lemma}[theorem]{Lemma}
\newtheorem*{lemma*}{Lemma}
\newtheorem{proposition}[theorem]{Proposition}
\newtheorem*{proposition*}{Proposition}
\newcounter{algorithm}
\newcommand{\alg}{\textsc{Multiaccuracy Boost}}
\newcommand*\samethanks[1][\value{footnote}]{\footnotemark[#1]}
\begin{document}
\title{Multiaccuracy: Black-Box Post-Processing for Fairness in Classification}

\author{Michael P.~Kim\thanks{These authors contributed equally.}~\thanks{Supported by NSF Grant CCF-1763299.}\\
\texttt{mpk@cs.stanford.edu}
\and 
Amirata Ghorbani\samethanks[1]\\
\texttt{amiratag@stanford.edu}
\and 
James Zou\\
\texttt{jamesz@stanford.edu}
}

\date{}

\maketitle

\begin{abstract}
Prediction systems are successfully deployed in applications ranging from disease diagnosis, to predicting credit worthiness, to image recognition.  Even when the overall accuracy is high, these systems may exhibit systematic biases that harm specific subpopulations; such biases may arise inadvertently due to underrepresentation in the data used to train a machine-learning model, or as the result of intentional malicious discrimination.
We develop a rigorous framework of \emph{multiaccuracy} auditing and post-processing to ensure accurate predictions across \emph{identifiable subgroups}.

Our algorithm, \alg, works in any setting where we have black-box access to a predictor and a relatively small set of labeled data for auditing;
importantly, this black-box framework allows for improved fairness and accountability of predictions, even when the predictor is minimally transparent.
We prove that \alg~converges efficiently and show that if the initial model is accurate on an identifiable subgroup, then the post-processed model will be also.
We experimentally demonstrate the effectiveness of the approach to improve the accuracy among minority subgroups in diverse applications (image classification, finance, population health).
Interestingly, \alg~can improve subpopulation accuracy (e.g. for ``black women'') even when the sensitive features (e.g. ``race'', ``gender'') are not given to the algorithm explicitly.
\end{abstract}

\section{Introduction}

Despite the successes of machine learning at
complex tasks that involve making predictions about people,
there is growing evidence that
``state-of-the-art'' models can perform
significantly less
accurately on minority populations than on the majority
population.  Indeed, a notable study of three
commercial face recognition systems known as the
``Gender Shades'' project \cite{ppb}, demonstrated
significant performance gaps across different
populations at classification tasks.  While
all systems achieved roughly 90\% accuracy at
gender detection on a popular benchmark, a closer
investigation revealed that the system was
significantly less accurate on female subjects compared to males
and on dark-skinned individuals compared to light-skinned.
Worse yet, this discrepancy
in accuracy compounded when comparing dark-skinned females
to light-skinned males; classification accuracy
differed between these groups by as much as $34\%$!
This study confirmed empirically the intuition that
machine-learned classifiers may optimize
predictions to perform well on the majority population,
inadvertently hurting performance on the minority
population in significant ways.

A first approach to address this serious problem
would be to update the training distribution to
reflect the distribution of people, making sure
historically-underrepresented populations are
well-represented in the training data.
While this
approach may be viewed as an eventual goal, often
for historical and social reasons, data from
certain minority populations is less available than from
the majority population.  In particular, we may
not immediately have enough data from these
underrepresented subpopulations
to train a complex model.
Additionally, even when adequate representative data is available, this process necessitates
retraining the underlying prediction model.
In the common setting where
the learned model is provided as a service, like
a commercial image recognition system, there may
not be sufficient incentive
(financial, social, etc.) for the service provider
to retrain the model.  Still, the clients of the
model may want to improve the accuracy of the
resulting predictions across the population, even
when they are not privy to the inner workings of the prediction system.

At a high level, our work focuses on a setting, adapted from \cite{multi}, that is common in practice but distinct from much of the other literature on fairness in classification.
We are given black-box access to a classifier, $f_0$, and a relatively small ``validation set" of labeled samples
drawn from some representative distribution $\D$;
our goal is to \emph{audit} $f_0$ to determine whether the predictor satisfies
a strong notion of subgroup fairness, \emph{multiaccuracy}.
Multiaccuracy requires (in a sense that we make formal in Section~\ref{sec:def}) that predictions be unbiased, not just overall, but on every identifiable subpopulation.
If auditing reveals that the predictor does not satisfy multiaccuracy, we aim to
\emph{post-process} $f_0$ to produce a new classifier $f$ that is multiaccurate,
without adversely affecting the subpopulations where $f_0$ was already accurate.

Even if the initial classifier $f_0$ was trained in good faith, it may still exhibit biases on significant subpopulations when evaluated on samples from $\D$.  This setting can arise when minority populations are underrepresented in the distribution used to train $f_0$ compared to the desired distribution $\D$, as in the Gender Shades study \cite{ppb}.
In general, we make no assumptions about how $f_0$ was trained. In particular, $f_0$ may be an adversarially-chosen classifier, which explicitly aims to give erroneous predictions within some protected subpopulation while satisfying marginal statistical notions of fairness.
Indeed, the influential work on ``Fairness Through Awareness'' \cite{fta}, followed by \cite{kearns2017preventing,multi}, demonstrated the weakness of statistical notions of fairness (such as statistical parity, equalized odds, and calibration),
showing that prediction systems can exhibit material forms of discrimination against protected populations, even though they satisfy statistical fairness conditions.  Left unaddressed, such forms of discrimination may discourage the participation of minority populations, leading to further underrepresentation of these groups.
Thus, our goal will be to mitigate systematic biases broadly enough to handle inadvertent and malicious forms of discrimination.

\paragraph{\textbf{Our contributions}}
We investigate a notion of fairness -- multiaccuracy -- originally proposed in \cite{multi}, and develop a framework for auditing and post-processing for multiaccuracy.
We develop a new algorithm, \alg, where a simple learning algorithm -- the auditor -- is used to identify subpopulations in $\D$ where $f_0$ is systematically making more mistakes. This information is then used to iteratively post-process $f_0$ until the multiaccuracy condition -- unbiased predictions in each identifiable subgroup -- is satisfied. Our notion of multiaccuracy differs from parity-based notions of fairness, and is reasonable in settings such as gender detection where we would like to boost the classifier's accuracy across many subgroups. We prove convergence guarantees for \alg~ and show that post-processing for multiaccuracy may actually improve the \emph{overall} classification accuracy.  We describe the post-processing algorithm in Section~\ref{sec:alg}.

Empirically, we validate \alg~ in
several different case studies: gender detection from images as in Gender Shades \cite{ppb}, a semi-synthetic medical diagnosis task, and
adult income prediction.
In all three cases, we use standard, initial prediction models
that achieve good overall classification error but
exhibit biases against significant subpopulations.
After post-processing, the accuracy improves
across these minority groups, even though minority-status
is not explicitly given to the post-processing algorithm
as a feature.  
As long as there are features in the audit set correlated with the (unobserved) human categories, then \alg~ is effective at improving the classification accuracy across these categories.
As suggested by the theory,
\alg~actually improves the overall accuracy, by identifying subpopulations where the initial models systematically erred;
further, post-processing does not
significantly affect performance on groups where accuracy was
already high.
We show that \alg, which only accesses $f_0$ as a black-box,
performs comparably and sometimes even better than very strong white-box alternatives which has full access to $f_0$.
These results are reported in Section~\ref{sec:experiments}.

In Section~\ref{sec:experiments:gender}, we explore the gender detection example further, investigating some of the practical aspects of multiaccuracy auditing and post-processing.
In particular, we observe that the representation of images used for auditing (and post-processing) matters;
we show that auditing is more effective when using an embedding of the images that was trained using an unsupervised autoencoder compared to using the internal representation of the neural network used for prediction. These findings seem consistent with the guiding philosophy, put forth by \cite{fta}, that maintaining ``awareness'' is paramount to detecting unfairness.
We also show that the auditing process, which we use algorithmically as a way to boost the accuracy of the classifier, can also be used to help people understand why their prediction models are making mistakes.
Specifically, the output of the multiaccuracy auditor can be used to produce examples of inputs where the predictor is erring significantly; this provides human interpretation for biases of the original classifier.

\section{Setting and multiaccuracy}
\label{sec:def}
\paragraph{High-level setting.}
Let $\X$ denote the input space; we denote by
$y: \X \to \set{0,1}$ the function that maps
inputs to their label.  Let $\D$ represent
the validation data distribution supported on
$\X$; the distribution $\D$ can be viewed as
the ``true" distribution, on which we will
evaluate the accuracy of the final model.
In particular, we assume that the important
subpopulations are sufficiently represented
on $\D$ (cf. Remark on data distribution).
Our post-processing learner receives as input
a small sample of labeled validation data
$\set{(x,y(x))}$, where $x \sim \D$, as well
as black-box access to
an initial regression / classification model
$f_0:\X\to[0,1]$.  The goal is to output a
new model (using calls to $f_0$) that
satisfies the multiaccuracy fairness
conditions (described below).

Importantly, we make no further assumptions
about $f_0$.  Typically, we will think
of $f_0$ as the output of a learning
algorithm, trained on some other
distribution $\D_0$
(also supported on $\X$); in this scenario, our goal is
to mitigate any inadvertently-learned
biases.  That said, another important
setting assumes that $f_0$ is chosen
\emph{adversarially} to discriminate against
a protected population of individuals,
while aiming to appear accurate and fair
on the whole; here, we aim to protect
subpopulations against malicious misclassification.
The formal guarantees of multiaccuracy provide meaningful protections from both
of these important forms of discrimination.

\paragraph{Additional Notation.}
For a subset $S \subseteq \X$,
we use $x \sim S$ to denote a sample from $\D$ conditioned
on membership in $S$.  We take the characteristic function
of $S$ to be $\chi_S(x) = 1$ if $x \in S$ and $0$ otherwise.
For a hypothesis $f:\X\to[0,1]$, we
denote the classification error of $f$ with respect
to a subset $S \subseteq \X$ as
$\er_S(f;y) = \Pr_{x \sim S}[\bar{f}(x) \neq y(x)]$,
where $\bar{f}(x)$ rounds
$f(x)$ to $\set{0,1}$.
For a function $z:\X\to [-1,1]$ and a subset $S \subseteq \X$,
let $z_S$ be the restriction to $S$ where $z_S(x) = z(x)$
if $x \in S$ and $z_S(x) = 0$ otherwise.
We use $\ell_\D(f;y) = \E_{x \sim \D}[\ell_x(f;y)]$ to denote
the expected cross-entropy loss of $f$ on $x \in \X$
where $\ell_x(f;y) = -y(x) \cdot \log(f(x))
- (1-y(x))\cdot \log(1-f(x))$.

\subsection{Multiaccuracy}
The goal of multiaccuracy is to achieve
low classification error, not just on $\X$ overall,
but also on subpopulations of $\X$.
This goal is formalized in the following definition adapted from
\cite{multi}. 
\begin{definition}[Multiaccuracy]
Let $\alpha \ge 0$ and let $\C \subseteq [-1,1]^{\X}$
be a class of functions on $\X$.
A hypothesis $f:\X \to [0,1]$ is \emph{$(\C,\alpha)$-multiaccurate}
if for all $c \in \C$:
\begin{equation}
\E_{x \sim \D}[c(x) \cdot (f(x) - y(x))]
\le \alpha. 
\end{equation}
\end{definition}
$(\C,\alpha)$-multiaccuracy guarantees that a hypothesis
appears unbiased according to a class of statistical tests
defined by $\C$.  As an example, we could define the class
in terms of a collection of subsets $S \subseteq \X$,
taking $\C$ to be $\chi_S$ (and its negation)
for each subset in the collection; in this case,
$(\C,\alpha)$-multiaccuracy guarantees that for each
$S$, the predictions of $f$ are at most $\alpha$-biased.

Ideally, we would hope to take $\C$ to be the
class of \emph{all} statistical tests. Requiring
multiaccuracy with respect to such a $\C$, however,
requires learning the function $y(x)$ exactly,
which is information-theoretically impossible
from a small sample.  
In practice, if we take $\C$ to be a
\emph{learnable} class of functions,
then $(\C,\alpha)$-multiaccuracy
guarantees accuracy on all \emph{efficiently-identifiable}
subpopulations.

For instance, if we took $\C$ to be the class of width-$4$ conjunctions, then multiaccuracy guarantees unbiasedness, not just on the marginal populations defined by race and separately gender, but by the subpopulations defined by the intersection of race, gender, and two other (possibly ``unprotected") features.
In particular, the subpopulations that multiaccuracy protects can be overlapping and include groups beyond traditionally-protected populations.
This form of computationally-bounded intersectionality provides strong protections against forms of discrimination, like subset targeting, discussed in \cite{fta,multi}.

\subsection{Classification accuracy from multiaccuracy}
Multiaccuracy guarantees that the predictions of a classifier appear unbiased on a rich class of subpopulations; ideally though, we would state a guarantee in terms of the classification accuracy, not just the bias.
Intuitively, as we take $\C$ to define a richer class of tests,
the guarantees of multiaccuracy become stronger.
This intuition is formalized in the following proposition.
\begin{proposition}
\label{prop:ac2er}
Let $\hat{y}:\X \to \set{-1,1}$ as $\hat{y}(x)=1-2y(x)$.
Suppose that for $S \subseteq \X$ with $\Pr_{x \sim \D}[x \in S] \ge \gamma$,
there is some $c \in \C$
such that $\E_{x\sim \D}[\card{c(x) - \hat{y}_S(x)}] \le \tau$.
Then if $f$ is $(\C,\alpha)$-multiaccurate,
$\er_S(f;y) \le 2\cdot(\alpha + \tau)/\gamma$.
\end{proposition}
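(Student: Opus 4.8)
The plan is to isolate a single nonnegative pointwise quantity that simultaneously (i) upper-bounds the $0/1$ classification error via a Markov-type argument and (ii) reduces to a multiaccuracy test against $\hat{y}_S$. The natural candidate is $g(x) = \hat{y}(x)\cdot(f(x)-y(x))$. First I would verify its two key properties by a short case analysis on $y(x)$, using $f(x)\in[0,1]$ and $y(x)\in\set{0,1}$. When $y(x)=1$ we have $\hat{y}(x)=-1$ and $g(x)=1-f(x)\ge 0$; when $y(x)=0$ we have $\hat{y}(x)=1$ and $g(x)=f(x)\ge 0$. So $g\ge 0$ everywhere. The same case analysis shows that a misclassification $\bar{f}(x)\neq y(x)$ can occur only when $g(x)\ge 1/2$ (using $f(x)\ge 1/2$ as the rounding rule when $y(x)=0$), so that the error indicator is dominated by $\mathbf{1}[g(x)\ge 1/2]$.

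Given these two facts, the second step applies Markov's inequality to the conditional distribution $x \sim S$. Because $g\ge 0$,
\[
\er_S(f;y) \le \Pr_{x\sim S}\!\left[g(x)\ge \tfrac12\right] \le 2\,\E_{x\sim S}[g(x)].
\]
I would then rewrite the conditional expectation as an unconditional one, $\E_{x\sim S}[g(x)] = \E_{x\sim\D}[\chi_S(x)g(x)]\,/\,\Pr_{x\sim\D}[x\in S]$, and observe that $\chi_S(x)\hat{y}(x)$ is exactly $\hat{y}_S(x)$ by the definition of the restriction. Hence the numerator equals $\E_{x\sim\D}[\hat{y}_S(x)\cdot(f(x)-y(x))]$, which is precisely the form of a multiaccuracy test, except that $\hat{y}_S$ need not itself lie in $\C$.

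The third step bridges this gap using the hypothesized $c\in\C$ with $\E_{x\sim\D}[\card{c(x)-\hat{y}_S(x)}]\le\tau$. Writing $\hat{y}_S = c + (\hat{y}_S - c)$ and splitting the expectation, the $c$-term is at most $\alpha$ by $(\C,\alpha)$-multiaccuracy (only the one-sided upper bound is needed, since we are bounding from above), while the remainder is controlled by $\card{f(x)-y(x)}\le 1$, giving $\E_{x\sim\D}[(\hat{y}_S(x)-c(x))(f(x)-y(x))] \le \E_{x\sim\D}[\card{\hat{y}_S(x)-c(x)}]\le\tau$. Thus the numerator is at most $\alpha+\tau$, and dividing by $\Pr_{x\sim\D}[x\in S]\ge\gamma$ (legitimate since $\alpha+\tau\ge 0$) yields $\E_{x\sim S}[g(x)]\le(\alpha+\tau)/\gamma$. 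Combining with the Markov bound gives $\er_S(f;y)\le 2(\alpha+\tau)/\gamma$, as claimed.

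The step I expect to require the most care is the first one: identifying the right surrogate $g$ and verifying that it is both nonnegative and a valid upper bound for the $0/1$ error. Once $g$ is in hand, everything else is a short chain of Markov's inequality, the identity $\chi_S\hat{y}=\hat{y}_S$, and a Hölder-type $L_1$/$L_\infty$ split; the only minor subtlety is tracking the $1/2$ threshold and the rounding convention at the boundary $f(x)=1/2$, which only helps the inequality.
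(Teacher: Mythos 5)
Your proof is correct and is essentially the paper's argument in slightly different packaging: the paper partitions $S$ into the four sets $S_{ij}$ and lower-bounds the test statistic $\E_{x\sim\D}[c(x)(f(x)-y(x))]$ by $(\beta_{01}+\beta_{10})/2-\tau$ before invoking multiaccuracy, whereas you phrase the identical mechanism as Markov's inequality applied to the nonnegative surrogate $g=\hat{y}\cdot(f-y)$, combined with the same $c=\hat{y}_S+(c-\hat{y}_S)$ split and H\"older-type $L_1$/$L_\infty$ bound. The two chains of inequalities coincide step for step, so no further comparison is needed.
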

That is, if there is a function in $\C$
that correlates well with the label function
on a significant subpopulation $S$,
then multi-accuracy
translates into a guarantee on the \emph{classification
accuracy} on this subpopulation. 

\paragraph{Remark on data distribution.}
Note that in our definition of multiaccuracy, we take an
expectation over the distribution $\D$ of validation data.
Ideally, $\D$ should reflect the true population
distribution or could be aspirational, increasing
the representation of populations
who have experienced historical discrimination; for instance,
the classification error guarantee of Proposition~\ref{prop:ac2er}
improves as $\gamma$, the density of the protected subpopulation $S$, grows.
Still, if we take the multiaccuracy error term $\alpha$ small enough, then
we may still hope to improve the accuracy on
less-represented subpopulations.
Our experiments suggest that applying the multiaccuracy framework with an unbalanced valdiation distribution may still help improve the accuracy on underrepresented groups.

\subsection{Auditing for multiaccuracy}
With the definition of $(\C,\alpha)$-multiaccuracy in place, a natural question to ask is how to test if a hypothesis $f$ satisfies the definition; further, if $f$ does not satisfy $(\C,\alpha)$-multiaccuracy, can we update $f$ efficiently to satisfy the definition, while maintaining the overall accuracy?
We will use a learning algorithm $\A$ to audit a classifier $f$ for
multiaccuracy.  The algorithm $\A$ receives a small
sample from $\D$ and aims to learn a function $h$
that correlates with the \emph{residual} function $f-y$.
In Section~\ref{sec:alg}, we describe how to use such an auditor to solve the post-processing problem.
This connection between subpopulation fairness and learning is also made in \cite{kearns2017preventing,multi,ftba}, albeit for different tasks.

\begin{definition}[Multiaccuracy auditing]
Let $\alpha > 0, m \in \N$, and let $\A:(\X \times [-1,1])^m \to [-1,1]^\X$ be a
learning algorithm.  Suppose $D \sim \D^m$ is a set of
independent samples.
A hypothesis $f:\X\to[0,1]$ passes
\emph{$(\A,\alpha)$-multiaccuracy auditing} if for
$h = \A(D; f-y)$:
\begin{equation}
\E_{x\sim\D}\left[h(x)\cdot(f(x)-y(x))\right]
\le \alpha.
\end{equation}
\end{definition}
A special case of $(\A,\alpha)$-multiaccuracy
auditing uses a naive learning algorithm that
iterates over statistical tests $c \in \C$.
Concretely, in our experiments,
we audit with
ridge regression and decision tree regression;
both auditors are effective at identifying
subpopulations on which the model is
underperforming.  Further, in the image recognition setting, we show that auditing can be used to produce interpretable synopses of the types of mistakes a predictor makes.

\section{Post-processing for multiaccuracy}
\label{sec:alg}
Here, we describe an algorithm, \alg, for post-processing a
pre-trained model to achieve multiaccuracy.  The
algorithm is given black-box access to an initial hypothesis
$f_0:\X \to [0,1]$
and a learning algorithm
$\A:(\X\times [-1,1])^m \to [-1,1]^\X$, and for any accuracy parameter
$\alpha > 0$, outputs a hypothesis $f:\X \to [0,1]$
that passes $(\A,\alpha)$-multiaccuracy auditing.
The post-processing algorithm is an iterative procedure
similar to boosting \cite{freund1995desicion,schapire2012boosting}, that uses the multiplicative weights
framework to improve suboptimal predictions identified
by the auditor.  This approach is similar to the algorithm
given in \cite{multi} in the context of fairness
and \cite{trevisan2009regularity} in the context of pseudorandomness.
Importantly, we adapt these algorithms so that \alg~exhibits
what we call the ``do-no-harm'' guarantee; informally,
if $f_0$ has low classification error on some subpopulation
$S \subseteq \X$ identified by $\A$, then the resulting
classification error on $S$ cannot increase significantly.
In this sense, achieving our notion
of fairness need not adversely affect the utility of
the classifier.

A key algorithmic challenge is to learn a multiaccurate predictor without overfitting to the small sample of validation data.
In theory, we prove bounds on the sample complexity necessary to guarantee good generalization as a function of the class $\C$, the error parameter $\alpha$, and the size of subpopulations we wish to protect $\gamma$.
In practice, we need to balance the choice of $\C$ (or $\A$) and the number of iterations of our algorithm to make sure that the auditor is discovering true signal, rather than noise in the validation data.
Indeed, if the auditor $\A$ learns an expressive enough class of functions, then our algorithm will start to overfit at some point; 
we show empirically that multiaccuracy post-processing improves the generalization error before overfitting.
Next, we give an overview of the algorithm, and state its formal guarantees in Section~\ref{sec:theory}.

\begin{figure}[t!]
{\refstepcounter{algorithm} \label{alg}{\bf Algorithm~\thealgorithm:}} \alg 

\fbox{\parbox{\columnwidth}{
{\bf Given:}
\begin{itemize}
\item initial hypothesis $f_0:\X\to[0,1]$
\item auditing algorithm $\A:(\X\times[-1,1])^m \to [-1,1]^\X$
\item accuracy parameter $\alpha > 0$
\item validation data
$D = D_0,\hdots,D_T \sim \D^m$
\end{itemize}

{\bf Let:}
\begin{itemize}
\item $\X_0 \gets \set{x \in \X : f_0(x) \le 1/2}$
\item $\X_1 \gets \set{x \in \X : f_0(x) >1/2}$ \hfill\texttt{// partition X according to f0}
\item $\S \gets \set{\X,\X_0,\X_1}$
\end{itemize}

{\bf Repeat:} from $t = 0,1,\hdots,T$
\begin{itemize}
\item For $S \in \S$:\hfill\texttt{// audit ft on X,X0,X1 with fresh data}
\\\phantom{For }$h_{t,S} \gets \A(D_t;(f_t-y)_S)$
\item $S^* \gets {\rm argmax}_{S \in \S}\E_{x \sim D_t}[h_{t,S}(x) \cdot(f_t(x)-y(x)) ]$
\hfill\texttt{// take largest residual}
\item if $\E_{x \sim D_t}[h_{t,S^*}(x) \cdot (f_t(x) - y(x))]\le \alpha$:
\\\phantom{For }{\bf return} $f_t$\hfill\texttt{// terminate when at most alpha}
\item $f_{t+1}(x) \propto e^{-\eta h_{t,S^*}(x)}\cdot f_{t}(x)\qquad\qquad\qquad \forall x \in S^*$
\hfill\texttt{// multiplicative weights update}
\end{itemize}
}}
\end{figure}

At a high level, \alg~ starts by partitioning
the input space $\X$ based on the initial classifier $f_0$ into
$\X_0 = \set{x \in \X : f_0(x) \le 1/2}$ and $\X_1 = \set{x \in \X : f_0(x) > 1/2}$; note that we can partition $\X$ simply by calling $f_0$.
Partitioning the search space $\X$
based on the predictions of $f_0$
helps to ensure that the $f$ we output maintains the initial accuracy of $f_0$; in particular, it
allows us to search over just the positive-labeled
examples (negative, resp.) for a way to improve the
classifier.  The initial hypothesis may make false positive predictions
and false negative predictions for very different reasons,
even if in both cases the reason is simple enough to be identified by the auditor.

After partitioning the input space, the procedure iteratively
uses the learning algorithm $\A$ to search over $\X$ (and within the partitions $\X_0,\X_1$)
to find any function which correlates significantly
with the current residual in prediction $f-y$.
If $\A$ successfully returns some function $h:\X \to [-1,1]$
that identifies a significant subpopulation where
the current hypothesis is inaccurate, the algorithm
updates the predictions
multiplicatively according to $h$.
In order to update the predictions simultaneously for all $x \in \X$,
at the $t$th iteration, we build
$f_{t+1}$ by incorporating $h_t$ into
the previous model $f_t$.
This approach of augmenting the model at each iteration is similar to boosting.
To guarantee
good generalization of $h$, we assume that $\A$
uses a fresh sample $D_t \sim \D^m$ per iteration. In practice, when we have few samples, we can put all of our samples in one batch and use noise-addition techniques to reduce overfitting~\cite{dwork2015reusable,russo2015much};
this connection to adaptive data analysis was studied formally
in \cite{multi}.

From the stopping condition, it is clear that when the
algorithm terminates, $f_T$
passes $(\A,\alpha)$-multiaccuracy auditing.
Thus, it remains to bound the number of iterations $T$ before
\alg~ terminates.  Additionally, as described, the algorithm
evaluates statistics like $\E_{x\sim\D}[h(x)\cdot(f(x)-y(x))]$,
which depends on $y(x)$ for all $x \in \X$; we need to bound the
number of validation samples we need to guarantee good generalization to the unseen population.
Theorem~\ref{thm:alg} provides formal guarantees on
the convergence rate and the
sample complexity from $\D$ needed
to estimate the expectations
sufficiently-accurately.

\paragraph{Do no harm.}
The distinction between our approach and most prior works on
fairness (especially \cite{kearns2017preventing})
is made clear from the ``do-no-harm''
property that \alg~exhibits, stated formally as Theorem~\ref{thm:donoharm}.
In a nutshell, the property guarantees that on any
subpopulation $S \subseteq \X$ that $\A$ audits,
the classification error cannot increase significantly
from $f_0$ to the post-processed classifier.
Further, the bound we prove is very pessimistic.
Both in theory and in practice, we do not expect
any increase to occur.  In particular,
the convergence analysis of \alg~follows
by showing that at every update, the average
cross-entropy loss on the population we update must
drop significantly.  Termination is guaranteed
because after too many iterations of auditing, the
post-processing will have learned $y$ perfectly.
Thus, if we use Algorithm~\ref{alg} to post-process a
model that is already achieves high accuracy on the validation distribution
the resulting model's accuracy should not deteriorate
in significant ways; empirically, we observe that
classification accuracy (on held-out test set)
tends to improve over $\D$ after multiaccuracy post-processing.

\subsection{Formal guarantees of \alg}
\label{sec:theory}
For clarity of presentation,
we describe the formal guarantees of
our algorithm assuming that $\A$ provably agnostic
learns a class of tests $\C$, in order to describe
the sample complexity appropriately.
The guarantees on the rate of convergence
do not rely on such an assumption.
We show that, indeed, Algorithm~\ref{alg} must
converge in a bounded number of iterations.
The proof follows by showing that, for an appropriately chosen $\eta$
(on the order of $\alpha$), each
update improves the cross-entropy loss over the updated set $S$, so the bound depends on
the initial cross-entropy loss.

To estimate the statistics used in \alg, we need to bound the sample complexity required for
the auditor to generalize.  Informally,
we say the \emph{dimension} $d(\C)$
of an agnostically learnable class $\C$
is a value such that
$m \ge \Omega\left(\frac{d(\C) + \log(1/\delta)}{\alpha^2}\right)$
random samples from $\D$ guarantee uniform convergence
over $\C$ with accuracy $\alpha$ with failure
probability at most $\delta$.  Examples of upper bounds on this notion of dimension
include $\log(\card{\C})$ for a finite class of tests,
the VC-dimension \cite{kearnsvazirani} for boolean tests,
and the metric entropy \cite{boucheron2013concentration} of
real-valued tests.  We state the formal guarantees as Theorem~\ref{thm:alg}.

\begin{theorem}\label{thm:alg}
Let $\alpha, \delta > 0$; suppose $\A$ agnostic learns
a class $\C \subseteq [-1,1]^\X$ of dimension $d(\C)$.
Then, using $\eta = O(\alpha)$,
Algorithm~\ref{alg} converges to a
$(\C,\alpha)$-multiaccurate hypothesis $f_T$ in
$T = O\left(\frac{\ell_\D(f_0;y)}{\alpha^2}\right)$
iterations from $m = \tilde{O}\left(T \cdot \frac{d(\C) + \log(1/\delta)}{\alpha^2}\right)$
random samples with probability
$\ge 1-\delta$.
\end{theorem}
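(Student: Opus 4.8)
The plan is to use the cross-entropy loss as a potential function and show that every update of \alg{} decreases it by $\Omega(\alpha^2)$, which bounds the number of iterations; the sample complexity then follows by running the auditor on fresh data each round and applying a dimension-based uniform-convergence bound with a union bound over rounds. Concretely, I would define $\Phi_t = \ell_\D(f_t;y)$, note that $\Phi_t \ge 0$ and $\Phi_0 = \ell_\D(f_0;y)$, and observe that since only $S^*$ is updated at step $t$, the change $\Phi_t - \Phi_{t+1}$ equals the contribution of points in $S^*$ alone. (I will assume, as is natural, that $h_{t,S}=\A(D_t;(f_t-y)_S)$ is supported on $S$, so that the audit statistic over all of $\X$ coincides with the correlation of $h_{t,S^*}$ against the residual restricted to $S^*$; I also take $f_0$ bounded away from $\{0,1\}$ so that $\Phi_0<\infty$ and the loss stays finite, which the bounded update with finite $T$ preserves.)

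The heart of the argument is a per-point drop bound. Interpreting the multiplicative-weights step as reweighting the Bernoulli pair $(1-f_t(x),f_t(x))$ by $e^{\pm\eta h_{t,S^*}(x)}$ and renormalizing by $Z(x)=f_t(x)e^{-\eta h_{t,S^*}(x)}+(1-f_t(x))e^{\eta h_{t,S^*}(x)}$, a direct computation gives, for $x \in S^*$,
\[
\ell_x(f_t;y) - \ell_x(f_{t+1};y) \;=\; 2\eta\, h_{t,S^*}(x)(f_t(x)-y(x)) - \log Z(x).
\]
Bounding $e^{u}\le 1+u+u^2$ for $\lvert u\rvert \le 1$ and $\log(1+v)\le v$, together with $h_{t,S^*}(x)\in[-1,1]$, yields $\log Z(x) \le \eta h_{t,S^*}(x)(1-2f_t(x)) + \eta^2$, and hence the clean per-point estimate $\ell_x(f_t;y)-\ell_x(f_{t+1};y) \ge 2\eta\, h_{t,S^*}(x)(f_t(x)-y(x)) - \eta^2$. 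Averaging over $x\sim\D$ (the loss is unchanged off $S^*$) gives
\[
\Phi_t - \Phi_{t+1} \;\ge\; 2\eta\,\E_{x\sim\D}\!\left[h_{t,S^*}(x)(f_t(x)-y(x))\right] - \eta^2 .
\]
Whenever the algorithm does not terminate, the (population) audit statistic exceeds $\alpha$, so choosing $\eta=\Theta(\alpha)$ makes the right-hand side $\Omega(\alpha^2)$. Telescoping and using $\Phi_T\ge 0$ gives $T\cdot\Omega(\alpha^2)\le \Phi_0=\ell_\D(f_0;y)$, i.e. $T=O(\ell_\D(f_0;y)/\alpha^2)$.

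For the sample bound I would exploit that $f_t$ depends only on $D_0,\dots,D_{t-1}$, so at round $t$ the sample $D_t$ is independent of the fixed residual $f_t-y$. By the definition of $d(\C)$, using $m_0=\tilde O\!\left((d(\C)+\log(T/\delta))/\alpha^2\right)$ fresh samples gives, with failure probability $\delta/T$, that empirical and population correlations agree to within $O(\alpha)$ uniformly over all $c\in\C$ (hence over the auditor's output). A union bound over the $T$ rounds gives total failure probability $\le\delta$ and total sample complexity $T\cdot m_0 = \tilde O\!\left(T(d(\C)+\log(1/\delta))/\alpha^2\right)$. This uniform convergence serves two purposes simultaneously: on update steps it certifies that the true drop is still $\Omega(\alpha^2)$ (so the iteration bound is valid on the genuine distribution), and on termination it certifies $(\C,\alpha)$-multiaccuracy, since by the agnostic-learning guarantee a violating $c\in\C$ with population correlation above $\alpha$ would have been detected empirically by $\A$.

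The step I expect to be the main obstacle is the constant bookkeeping that reconciles three distinct correlation quantities: the empirical statistic the algorithm thresholds, the population correlation of $h_{t,S^*}$ that drives the cross-entropy drop, and the population correlation of the best $c\in\C$ that must be controlled to conclude multiaccuracy. One must choose the step size $\eta=\Theta(\alpha)$ and the uniform-convergence tolerance $O(\alpha)$ so that updates always make real progress \emph{and} termination always certifies genuine $(\C,\alpha)$-multiaccuracy, with the agnostic-learning error folded in; getting these slacks to compose without degrading the $\alpha^{-2}$ dependence is the delicate part. A secondary technical point is keeping the cross-entropy finite throughout, which I handle by the mild assumption that $f_0$ (and hence every $f_t$) is bounded away from $\{0,1\}$.
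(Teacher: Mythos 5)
Your proposal is correct and follows essentially the same route as the paper's proof: cross-entropy as a potential function, a Taylor-expansion bound on the multiplicative-weights normalizer giving a per-update drop of $\Omega(\alpha^2)$ when $\eta = \Theta(\alpha)$, telescoping against $\ell_\D(f_0;y)$ for the iteration bound, and fresh samples per round with dimension-based uniform convergence and a union bound for the sample complexity. One cosmetic slip: your displayed identity should read $\ell_x(f_t;y)-\ell_x(f_{t+1};y)=\eta\, h_{t,S^*}(x)\bigl(1-2y(x)\bigr)-\log Z(x)$, after which your bound $\log Z(x) \le \eta\, h_{t,S^*}(x)\bigl(1-2f_t(x)\bigr)+\eta^2$ yields exactly the per-point estimate $2\eta\, h_{t,S^*}(x)\bigl(f_t(x)-y(x)\bigr)-\eta^2$ that you state, so the argument goes through unchanged.
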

Roughly speaking, for constant $\alpha,\delta$,
the sample complexity scales with the dimension
of the class $\C$.  For many relevant classes $\C$
for which we would want to enforce
$(\C,\alpha)$-multiaccuracy, this dimension will be
significantly smaller than the amount of data
required to train an accurate initial $f_0$.
Note also that our sample complexity is
completely generic and we make no effort to optimize
the exact bound. In particular, for structured $\C$
and $\A$, better uniform convergence bounds can be proved.
Further, appealing to a recent line of work on
adaptive data analysis initiated by
\cite{dwork2015reusable,russo2015much}, we can
avoid resampling at each iteration as in \cite{multi}.

\paragraph{Do no harm.}
The do-no-harm property guarantees that the classification
error on any subpopulation that $\A$ audits cannot
increase significantly.  As we assume $\A$ can identify
a very rich class of overlapping sets, in aggregate,
this property gives a strong guarantee on the utility of the
resulting predictor.  Further,
the proof of Theorem~\ref{thm:donoharm} reveals
that this worst-case bound is very pessimistic
and can be improved with stronger assumptions.
\begin{theorem}[Do-no-harm]\label{thm:donoharm}
Let $\alpha,\beta,\gamma > 0$
and $S \subseteq \X$ be a subpopulation where
$\Pr_{x\sim \D}[x \in S] \ge \gamma$.
Suppose
$\A$ audits the characteristic function $\chi_S(x)$ and its negation.
Let $f:\X\to [0,1]$ be the output of Algorithm~\ref{alg} when
given $f_0:\X \to [0,1]$, $\A$,
and $\alpha \le \beta\gamma$ as input.
Then the classification error of $f$ on the subset
$S$ is bounded as
\begin{equation}
\er_S(f;y) \le 3\cdot\er_S(f_0;y) + 4\beta.
\end{equation}
\end{theorem}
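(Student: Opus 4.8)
The plan is to turn the algorithm's termination condition into a two-sided bias guarantee on each cell of the partition $\set{\X_0,\X_1}$, and then exploit the fact that $f_0$ is a \emph{constant} classifier on each cell to convert that signed bias into a genuine bound on classification error. The reason the partition is indispensable is conceptual: multiaccuracy only certifies that the \emph{signed} bias $\E[c\cdot(f-y)]$ is small, whereas classification error depends on the \emph{magnitude} of the residual (a predictor pinned at $1/2$ has zero bias but terrible accuracy). On each cell, however, the label $y$ is concentrated on a single value up to the initial error, which is exactly what lets the signed constraint control the newly introduced misclassifications.

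First I would record what termination buys us. When \alg~halts, $f$ passes $(\A,\alpha)$-multiaccuracy auditing on every $S'\in\S$, in particular on $\X_0$ and $\X_1$. Since $\chi_S$ and $-\chi_S$ lie in the class $\A$ searches over, and auditing the restricted residual $(f-y)_{\X_b}$ against $\chi_S$ is exactly testing $\E_{x\sim\D}[\chi_{S\cap\X_b}(x)(f(x)-y(x))]$, the termination condition (no $S'$ exceeds $\alpha$) gives the two-sided bound
\[
\big|\E_{x\sim\D}[\chi_{S\cap\X_b}(x)\cdot(f(x)-y(x))]\big|\le\alpha,\qquad b\in\set{0,1}.
\]
Writing this as $\Pr_{x\sim\D}[x\in S\cap\X_b]\cdot\big|\E_{x\sim S\cap\X_b}[f-y]\big|\le\alpha$ isolates the signed bias of $f$ on each cell.

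The heart of the argument is a per-cell accounting of the misclassifications of $f$. On $\X_1$ we have $\bar f_0\equiv 1$, so every $f_0$-error in $S\cap\X_1$ is a point with $y=0$ and $\er_{S\cap\X_1}(f_0;y)=\Pr_{x\sim S\cap\X_1}[y=0]$ (symmetrically on $\X_0$). I would split the errors of $f$ on $S\cap\X_1$ into \emph{inherited} errors $\set{\bar f=1,\,y=0}$ and \emph{new} errors $\set{\bar f=0,\,y=1}$. Inherited errors are charged directly, with \emph{no} Markov loss: $\Pr_{x\sim S\cap\X_1}[\bar f=1,y=0]\le\Pr_{x\sim S\cap\X_1}[y=0]=\er_{S\cap\X_1}(f_0;y)$. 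New errors satisfy $1-f\ge1/2$, so $\Pr_{x\sim S\cap\X_1}[\bar f=0,y=1]\le 2\,\E_{x\sim S\cap\X_1}[y\cdot(1-f)]$; expanding $\E_{x\sim S\cap\X_1}[y(1-f)]=\E_{x\sim S\cap\X_1}[(1-y)f]-\E_{x\sim S\cap\X_1}[f-y]$ and using $f\le1$ bounds this by $\er_{S\cap\X_1}(f_0;y)+\big|\E_{x\sim S\cap\X_1}[f-y]\big|$. Summing the two contributions yields
\[
\er_{S\cap\X_1}(f;y)\le 3\,\er_{S\cap\X_1}(f_0;y)+2\,\big|\E_{x\sim S\cap\X_1}[f-y]\big|,
\]
and the symmetric computation gives the same bound on $\X_0$. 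Charging the inherited errors directly rather than through the lossy factor-$2$ Markov step applied to the new errors is precisely what produces the constant $3$ instead of $4$; this is the step I expect to be the main obstacle, since the naive estimate $\er_S(f;y)\le 2\,\E_{x\sim S}\card{f-y}$ only gives a factor $4$.

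Finally I would aggregate. Since $\set{\X_0,\X_1}$ partitions $\X$ (hence $S$), we have $\Pr_{x\sim\D}[x\in S]\cdot\er_S(f;y)=\sum_{b}\Pr_{x\sim\D}[x\in S\cap\X_b]\cdot\er_{S\cap\X_b}(f;y)$, and likewise for $f_0$. Multiplying each per-cell bound by $\Pr_{x\sim\D}[x\in S\cap\X_b]$, summing over $b$, and using $\Pr_{x\sim\D}[x\in S\cap\X_b]\cdot\big|\E_{x\sim S\cap\X_b}[f-y]\big|=\big|\E_{x\sim\D}[\chi_{S\cap\X_b}(f-y)]\big|\le\alpha$ from the first step gives
\[
\Pr_{x\sim\D}[x\in S]\cdot\er_S(f;y)\le 3\,\Pr_{x\sim\D}[x\in S]\cdot\er_S(f_0;y)+4\alpha.
\]
Dividing by $\Pr_{x\sim\D}[x\in S]\ge\gamma$ and using $\alpha\le\beta\gamma$ yields $\er_S(f;y)\le 3\,\er_S(f_0;y)+4\beta$, as claimed.
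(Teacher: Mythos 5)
Your proposal is correct and follows essentially the same route as the paper's proof: partition $S$ by $f_0$'s predicted label, bound the \emph{inherited} errors on each cell by $\er_{S\cap\X_b}(f_0;y)$ directly, use the audited $\pm\chi_S$ correlations (which, since new errors have residual magnitude at least $1/2$, control the new errors up to a factor $2$ plus another error term of $f_0$), and recombine the cells as a convex combination to get $3\cdot\er_S(f_0;y)+4\beta$. Your algebraic identity $y(1-f)=(1-y)f-(f-y)$ is just a rewriting of the paper's conditional-expectation/H\"older accounting, and your use of the two-sided bias bound where the paper uses one sign per cell ($-\chi_S$ on $\X_1$, $\chi_S$ on $\X_0$) is an immaterial difference.
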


\paragraph{Derivative learning for faster convergence}
\label{subsec:derivative}
Here, we propose auditing with
an algorithm $\A_\ell$, described formally in
Algorithm~\ref{auditor} in the Appendix,
that aims to learn a smoothed version
of the partial derivative function
of the cross-entropy loss with respect to the
\emph{predictions}
$\frac{\partial \ell(f;y)}{\partial f(x)} = \frac{1}{1 - f(x) - y(x)}$, 
which grows in magnitude as $\card{f(x) - y(x)}$ grows.
We show that running \alg~ with
$\A_\ell$ converges
in a number of iterations that grows with $\log(1/\alpha)$,
instead of polynomially, as we would expect
for a smooth, strongly convex objective \cite{shalev2012online,bubeck2015convex}.
This sort of gradient method does
not typically make sense when we don't have information about $y(x)$ for all
$x \in \X$; nevertheless, if there is a
simple way to improve $f$,
we might hope to \emph{learn} the
partial derivative as a function of $x \in \X$ in order to update $f$.
This application of \alg~is similar in spirit to
gradient boosting techniques \cite{mason2000boosting,friedman2001greedy}, which interpret boosting algorithms as
running gradient descent on an appropriate cost-functional.

In principle, if the magnitude of the residual
$\card{f(x) -y(x)}$ is not too close to $1$ for
most $x \in \X$, then the learned partial derivative
function should correlate well with the true gradient.
Empirically, we observe
that $\A_\ell$ is effective at finding ways to
improve the model quite rapidly.
Formally, we show the following linear convergence
guarantee.
\begin{proposition}\label{prop:learning}
Let $\alpha,B, L > 0$ and $\C \subseteq [-B,B]^\X$.
Suppose we run Algorithm~\ref{alg} with $\eta = O(1/L)$
on initial model $f_0$ with auditor
$\A_\ell$ defined in Algorithm~\ref{auditor}.
Then, Algorithm~\ref{alg} converges in
$T = O\left(L \cdot \log(\ell_\D(f_0;y)/\alpha)\right)$
iterations.
\end{proposition}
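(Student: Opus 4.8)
}
The plan is to use the expected cross-entropy loss $\ell_\D(f_t;y)$ as a potential and to show that, because the auditor $\A_\ell$ learns (a clipped, smoothed version of) the loss-derivative rather than merely the residual, each iteration contracts the potential by a constant \emph{multiplicative} factor. Geometric decay from $\ell_\D(f_0;y)$ down to the stopping threshold $\alpha$ then costs only $O(L\log(\ell_\D(f_0;y)/\alpha))$ steps. The right coordinates are the log-odds $\theta_t(x)=\log\frac{f_t(x)}{1-f_t(x)}$: the multiplicative-weights step $f_{t+1}\propto e^{-\eta h_t}f_t$ is exactly the additive step $\theta_{t+1}=\theta_t-\eta h_t$ on $S^*$, and in these coordinates the pointwise loss becomes $\log(1+e^{\theta})-y(x)\,\theta$, which is convex with first derivative $f-y$ and second derivative $f(1-f)$.

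First I would establish the one-step decrease via smoothness. A second-order Taylor expansion in $\theta$ along the segment from $\theta_t$ to $\theta_{t+1}$ gives, pointwise, $\ell_x(f_{t+1};y)\le \ell_x(f_t;y)-\eta\,h_t(x)\,(f_t(x)-y(x))+\tfrac{\eta^2}{2}\,f(1-f)\,h_t(x)^2$. The crucial feature of log-loss is that it is \emph{self-bounding}: exactly where the clipped derivative $h_t(x)\approx \frac{1}{1-f_t(x)-y(x)}$ is large (residual near $1$), the curvature $f(1-f)$ is correspondingly small, so the quadratic term stays comparable to the linear term instead of blowing up with $B$. Choosing $\eta=O(1/L)$, with $L$ the effective smoothness induced by the clipping to $[-B,B]$, makes the quadratic term at most half the linear one; taking expectations over $\D$ yields a net decrease $\ell_\D(f_{t+1};y)\le \ell_\D(f_t;y)-\tfrac{\eta}{2}\,c_t$, where $c_t=\E_{x\sim\D}[h_t(x)\,(f_t(x)-y(x))]$ is the audited correlation.

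Next comes the step where derivative learning pays off over residual fitting: a self-bounding lower bound on the correlation. Writing $u=\card{f_t(x)-y(x)}$, the unclipped derivative contributes $h_t(x)\,(f_t(x)-y(x))=\frac{u}{1-u}$, and since $\frac{u}{1-u}\ge -\log(1-u)=\ell_x(f_t;y)$ for all $u\in[0,1)$, the correlation dominates the loss: up to the agnostic-learning error of $\A_\ell$ and the error introduced by clipping at $B$, one gets $c_t\ge c\cdot\ell_\D(f_t;y)$ for a constant $c>0$. Combining with the one-step bound gives the contraction $\ell_\D(f_{t+1};y)\le(1-\Omega(\eta))\,\ell_\D(f_t;y)$; iterating yields $\ell_\D(f_T;y)\le(1-\Omega(\eta))^T\,\ell_\D(f_0;y)$, which falls below the threshold once $T=\Omega(\tfrac{1}{\eta}\log(\ell_\D(f_0;y)/\alpha))=\Omega(L\log(\ell_\D(f_0;y)/\alpha))$, the claimed bound.

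The hard part will be the two error terms hiding inside ``up to clipping and approximation.'' Controlling the second-order term uniformly requires quantifying how clipping to $[-B,B]$ and the smoothing in $\A_\ell$ degrade both the curvature/gradient cancellation and the correlation lower bound $c_t\ge c\,\ell_\D(f_t;y)$ --- in particular at the extreme points where $\card{f_t(x)-y(x)}$ is within $1/B$ of $1$, on which a clipped derivative can no longer witness the full loss. Pinning down $\eta=O(1/L)$ as a function of $B$ (the effective condition number of the clipped log-loss), and confirming that the correlation-based stopping rule $c_t\le\alpha$ actually triggers once the loss has contracted below $\Theta(\alpha)$ --- which needs a matching upper bound $c_t\le O(B)\,\ell_\D(f_t;y)$, contributing only a $\log B$ factor that is absorbed into the logarithm --- is the delicate part; everything else is standard geometric-series bookkeeping.
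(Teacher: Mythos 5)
Your high-level skeleton matches the paper's: use $\ell_\D(f_t;y)$ as a potential, show each non-terminating iteration decreases it by a constant \emph{multiplicative} factor (linear correlation term minus quadratic term, with the linear term dominating the current loss), and conclude $T = O(L\log(\ell_\D(f_0;y)/\alpha))$ by geometric decay. The correlation-dominates-loss step also appears in the paper, though derived globally from convexity ($\langle \grad_f \ell, f-y\rangle \ge \ell_\D(f;y)$) rather than from your pointwise identity $u/(1-u) \ge -\log(1-u)$. However, the mechanisms you propose for the two key inequalities have genuine gaps, and both are resolved in the paper by design features of $\A_\ell$ (Algorithm~\ref{auditor}) that your reconstruction misses. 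First, $L$ in the paper is \emph{not} a smoothness constant of a clipped loss: the auditor explicitly restricts its search to $\H = \set{h \in \C : \norm{h}^2 \le L\cdot \ell(f;y)}$, so the quadratic term in the multiplicative-weights analysis is bounded by $\eta^2\norm{h_{f_t}}^2 \le \eta^2 L \cdot \ell_\D(f_t;y)$ \emph{by construction} --- proportional to the current loss, which is exactly what multiplicative contraction needs. Your substitute, curvature self-bounding via $f(1-f)$, is not sound as written: the Taylor remainder involves the curvature at an intermediate point of the segment, and when the residual is extreme (say $y=1$, $f_t \approx 0$) a step of size $\eta\card{h_t} \approx \eta/f_t$ in log-odds passes through the region where $f(1-f) \approx 1/4$, so the quadratic term is not comparable to the linear one unless you separately force $\eta\card{h_t} = O(1)$, i.e.\ tie $\eta$ to the clipping bound $B$ rather than to $L$.

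Second, and more fundamentally, the lower bound $c_t \ge c\cdot \ell_\D(f_t;y)$ that drives your contraction genuinely fails for a clipped derivative: at points with $\card{f_t(x)-y(x)} > 1 - 1/B$ the clipped product is at most $\approx B$ while the pointwise loss $-\log(1-u)$ is unbounded, so if the loss mass concentrates there no constant $c$ exists. You flag this yourself as the ``hard part,'' but it is not fixable within your framework --- it is handled in the paper by the auditor's termination test: $\A_\ell$ compares its best norm-constrained hypothesis $h_f$ to the \emph{true, unclipped} gradient $\grad_f\ell$ and returns $h = 0$ (forcing Algorithm~\ref{alg} to terminate) whenever $\norm{h_f - \grad_f\ell}^2 > \frac{\eps}{2}\norm{\grad_f\ell}^2$, where $\eps$ measures the angle between $\grad_f\ell$ and $f-y$. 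Thus ``convergence'' in Proposition~\ref{prop:learning} means termination, not that the loss reaches $\alpha$; on any iteration that does \emph{not} terminate, Cauchy--Schwarz plus this approximation guarantee yields $\langle h_{f_t}, f_t - y\rangle \ge \frac{1}{4}\langle \grad_{f_t}\ell, f_t-y\rangle \ge \frac{1}{4}\ell_\D(f_t;y)$, and together with the norm constraint one gets $\ell_\D(f_{t+1};y) \le (1 - \eta/4 + \eta^2 L)\ell_\D(f_t;y) \le e^{-1/64L}\ell_\D(f_t;y)$ at $\eta = 1/(8L)$. Your proposed patch of proving a matching upper bound $c_t \le O(B)\ell_\D(f_t;y)$ to trigger the stopping rule is likewise unnecessary in the paper's design, since $\A_\ell$ checks $\ell(f;y)\le\alpha$ directly and returns the zero function.
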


\section{Experimental Evaluation}
\label{sec:experiments}

We evaluate the empirical performance of
\alg~ in three case studies.
The first and most in-depth case study aims to emulate the
conditions of the Gender Shades study \cite{ppb}, to test the
effectiveness of multiaccuracy auditing and post-processing on
this important real-world example.  In Section~\ref{sec:experiments:gender},
we show experimental results for auditing using two different
validation data sets.
In particular, one data set is fairly unbalanced and similar
to the data used to train, while the other data set was developed
in the Gender Shades study and is very balanced.
For each experiment, we report for various
subpopulations, the population percentage in $\D$,
accuracies of the initial model, our black-box
post-processed model, and white-box benchmarks.
In Section~\ref{sec:experiments:representation}, we discuss further subtleties of applying the multiaccuracy framework involving the representation of inputs passed to $\A$ for auditing;  in Section~\ref{sec:experiments:audit}, we show how auditing can be used beyond post-processing.  In particular, the hypotheses that $\A$ learns can be used to highlight subpopulations -- in an interpretable way --
on which the model is making mistakes.

We evaluate the effectiveness of multiaccuracy post-processing
on two other prediction tasks.  In both these case studies, we take the training and auditing
data distribution $\D$ to be the same, though the number of examples used for auditing will be quite small. Multiaccuracy still improves the performance
on significant subpopulations.  These results suggest some
interesting hypotheses about why machine-learned models may
incorporate biases in the first place, which warrant further
investigation.

\subsection{Multiaccuracy improves gender detection}
\label{sec:experiments:gender}
In this case study, we replicate the conditions of the Gender Shades study \cite{ppb}
to evaluate the effectiveness of the multiaccuracy framework in a realistic setting.
For our initial model, we train an inception-resnet-v1 \cite{szegedy2017inception} gender classification model using the CelebA data set with more than 200,000 face images \cite{celeba}. The resulting test accuracy on CelebA for binary gender classification is 98.4\%.

We applied \alg~ to this $f_0$ using two different auditing distributions.  In the first case, we audit using data from the LFW+a\footnote{We fixed the original data set's label noise for sex and race.} set \cite{lfwa, LFWTech}, which has similar demographic breakdowns as CelebA (i.e.~$\D \approx \D_0$).  In the second case, we audit using the PPB data set (developed in \cite{ppb}) which has balanced representation across gender and race (i.e.~$\D \neq \D_0$).
These experiments allows us to track the effectiveness of \alg~ as the representation of minority subpopulations changes.
In both cases, the auditor is ``blind'' -- it is not explicitly given the race or gender of any individual -- and knows nothing about the inner workings of the classifier.
Specifically, we take the auditor to be a variant of $\A_\ell$ (Algorithm~\ref{auditor}) that performs ridge regression to fit $\frac{\partial \ell_x(f;y)}{\partial f(x)} = \frac{1}{1-f(x)-y(x)}$.\footnote{To help
avoid outliers, we smooth the loss and use a
quadratic approximation for $\card{\frac{\partial \ell_x(f;y)}{\partial f(x)}} > 10$.} Instead of training the auditor on raw input pixels, we use the low dimensional representation of the input images derived by a variational autoencoder (VAE) trained on CelebA dataset using Facenet~\cite{faceNet} library.  (For more discussion of the representation used during auditing, cf.~Section~\ref{sec:experiments:representation}.)

To test the initial performance of $f_0$, we evaluated on a random subset of the LFW+a data containing 6,880 face images, each of which is labeled with both gender and race -- black (\textbf{B}) and non-black (\textbf{N}). 
For gender classification on LFW+a, $f_0$ achieves 94.4\% accuracy. Even though the overall accuracy is high, the error rate is much worse for females (23.1\%) compared to males (0.7\%) and worse for blacks (10.2\%) compared to non-blacks (5.1 \%); these results are qualitatively very similar to those observed by the commercial gender detection systems studied in \cite{ppb}.
We applied \alg, which converged in 7 iterations. The resulting classifier's classification error in minority subpopulations was substantially reduced, even though the auditing distribution was similar as the training distribution.

We compare \alg~ against a strong white-box baseline. Here, we retrain the network of $f_0$ using the audit set. 
Specifically, we retrain the last two layers of the network, which gives the best results amongst retraining methods. We emphasize that this baseline requires white-box access to $f_0$, which is often infeasible in practice. \alg~ accesses $f_0$ only as a black-box without any additional demographic information,
and still achieves comparable, if not improved, error rates compared to retraining.
We report the overall classification accuracy as well as accuracy on different subpopulations -- e.g.~\textbf{BF} indicates black female -- in Table~\ref{LFW_noinf}.
\begin{table}[h]
\centering
\begin{tabular}{c c c c c c c c c c}
 & \textbf{All} & \textbf{F} & \textbf{M} & \textbf{B} & \textbf{N} & \textbf{BF} & \textbf{BM} & \textbf{NF} & \textbf{NM} \\
\hline \hline
$\D$ & 100 & 21.0 & 79.0 & 4.9 & 95.1 & 2.1 & 18.8 & 2.7 & 76.3\\
\hline
$f_0$ & 5.4 & 23.1 & 0.7 & 10.2 & 5.1 & 20.4 & 2.1 & 23.4 & 0.6 \\
MA & 4.1 & 11.3 & 3.2 & 6.0 & 4.9 & 8.2 & 4.3 & 11.7 & 3.2\\
RT & 3.8 & 11.2 & 1.9 & 7.5 & 3.7 & 11.6 & 4.3 & 11.1 & 1.8

\end{tabular}
\caption{\textbf{Results for LFW+a gender classification.}
{\rm $\D$ denotes the percentages of each population in the data distribution; $f_0$ denotes the classification error (\%) of the initial predictor; MA denotes the classification error (\%) of the model after post-processing with \alg; RT denotes the classification error (\%) of the model after retraining on $\D$.}}
\label{LFW_noinf}
\end{table}

The second face dataset, PPB, in addition to being more balanced, is much smaller; thus, this experiment can be viewed as a stress test, evaluating the data efficiency of our post-processing technique.
The test set has 415 individuals and the audit set has size 855. PPB annotates each face as dark (\textbf{D}) or light-skinned (\textbf{L}).
As with LFW+a, we evaluated the test accuracy of the original $f_0$, the multiaccurate post-processed classifier, and retrained classifier on each subgroup.
\alg~converged in 5 iterations and again, substantially reduced error despite a small audit set and the lack of annotation about race or skin color (Table ~\ref{PPB_noinf}).

\begin{table}[ht]
\centering
\begin{tabular}{c c c c c c c c c c}

 &\textbf{All} & \textbf{F} & \textbf{M} & \textbf{D} & \textbf{L} & \textbf{DF} & \textbf{DM} & \textbf{LF} & \textbf{LM} \\
\hline \hline
$\D$ & 100 & 44.6& 55.4 & 46.4 & 53.6 & 21.4 & 25.0 & 23.2 & 30.4\\
\hline
$f_0$ & 9.9 & 21.6 & 0.4 & 18.8 & 2.2 & 39.8 & 1.0 & 5.2 & 0.0\\
MA & 3.9 & 6.5 & 1.8 & 7.3 & 0.9 & 12.5 & 2.9 & 1.0 & 0.8\\
RT & 2.2 & 3.8 & 0.9 & 4.2 & 0.4 & 6.8 & 1.9 & 1.0 & 0.0
\end{tabular}
\caption{\textbf{Results for the PPB gender classification data set.}
{\rm $\D$ denotes the percentages of each population in the data distribution; $f_0$ denotes the classification error (\%) of the initial predictor; MA denotes the classification error (\%) of the model after post-processing with \alg; RT denotes the classification error (\%) of the model after retraining on $\D$.}}
\label{PPB_noinf}
\end{table}

\begin{figure}[ht!]
\centering
\includegraphics[width=0.6\linewidth]{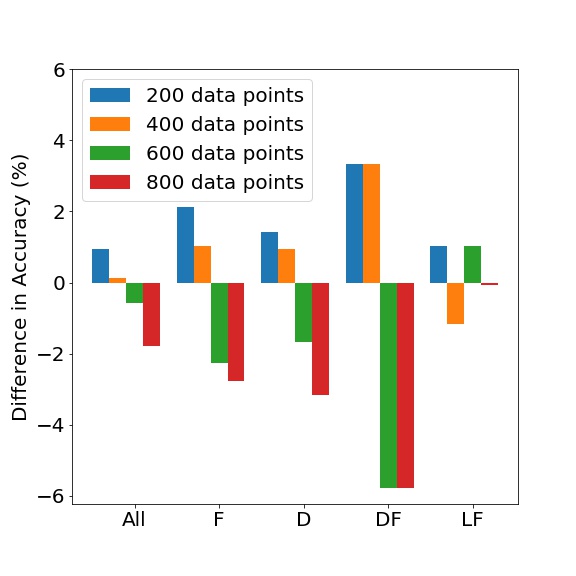}
\caption{\textbf{Multiaccuracy vs.~Retraining}: 
{\rm Difference in classification accuracy (i.e.~\% accuracy after \alg~ $-$ \% accuracy after retraining) is plotted on the vertical axis; this difference represents the accuracy advantage of ~\alg~ compared to retraining. As the size of the audit set shrinks, ~\alg~ has better performance both in overall accuracy and accuracy of the subgroups with the most initial bias because it is more data efficient.}
\label{fig:MA_va_Re}}
\end{figure}

To further test the data efficiency of ~\alg, we evaluate the effect of audit set size on the resulting accuracy of each method.
In Fig.~\ref{fig:MA_va_Re}, we report the performance of ~\alg~ versus the white-box retraining method for different sizes of audit set.
The plot displays the difference in accuracy for the overall population along with the subgroups that suffered the most initial bias.
It shows that the performance of ~\alg~ may actually be better than the white-box retraining baseline when 
validation data is especially scarce.

\subsubsection{Representation matters}
\label{sec:experiments:representation}
As discussed earlier, in the reported gender detection experiments,
we audit for multiaccuracy using ridge regression over an encoding of images produced by a variational autoencoder.
Using the representation of images produced by this encoding intuitively makes sense, as the autoencoder's reconstruction objective aims to preserve as much information about the image as possible while reducing the dimension.
Still, we may wonder whether multiaccuracy auditing over a different representation of the images would perform better.
In particular, since we are interested in improving the accuracy on the gender detection task, it seems plausible that a representation of the images based on the internal layers of the initial prediction network might preserve the information salient to gender detection more effectively.

We investigate the importance of the representation used to audit empirically.
In particular, we also evaluate the performance of \alg~ using the same auditor $\A_\ell$ run over two other sets of features, given by the last-layer and the second-to-last layer of the initial prediction residual network $f_0$.
In Table~\ref{table:representation}, we show that using the unsupervised VAE representation yields the best results.
Still, the representations from the last and second-to-last layers are
competitive with that of the VAE, and in some subpopulations are better.

Collectively, these findings bolster the argument for ``fairness through awareness'', which advocates that in order to make fair predictions, sensitive information (like race or gender) should be given to the (trustworthy) classifier.
While none of these representations explicitly encode sensitive group information, the VAE representation does preserve information about the original input, for instance skin color, that seems useful in understanding the group status.
The prediction network is trained to have the best prediction accuracy (on an unbalanced training data set), and thus,
the representations from the network reasonably may contain less information about these sensitive features.
These results suggest that the effectiveness of multiaccuracy does depend on the representation of inputs used for auditing, but so long as the representation is sufficiently expressive, \alg~ may be robust to the exact encoding of the features.

\begin{table}[h]
\centering
\begin{tabular}{c c c c c c c c c c}

 &\textbf{All} & \textbf{F} & \textbf{M} & \textbf{D} & \textbf{L} & \textbf{DF} & \textbf{DM} & \textbf{LF} & \textbf{LM} \\
\hline \hline
{\bf LFW+a:}\\
VAE & 4.1 & 11.3 & 3.2 & 6.0 & 4.9 & 8.2 & 4.3 & 11.7 & 3.2\\
$R_{1, f_0}$ & 4.9 & 13.6 & 2.6 & 6.3 & 4.9 & 8.8 & 4.3 & 14.1 & 2.6\\
$R_{2, f_0}$ & 4.5 & 12.6 & 2.4 & 6.3 & 4.4 & 8.8 & 4.3 & 13.1 & 2.3\\
\hline
{\bf PPB:}\\
VAE & 3.9 & 6.5 & 1.8 & 7.3 & 0.9 & 12.5 & 2.9 & 1.0 & 0.8\\
$R_{1,f_0}$ & 4.3 & 7.6 & 1.7 & 7.8 & 1.3 & 13.6 & 2.9 & 2.1 & 0.8\\
$R_{2,f_0}$ & 5.1 & 9.7 & 1.3 & 9.4 & 1.3 & 17.0 & 2.9 & 3.1 & 0.0\\
\end{tabular}
\caption{\textbf{Effect of representation on the \alg~ performance}
{\rm  VAE denotes the denotes the classification error (\%) using the VAE representation; $R_{1, f_0}$ denotes the classification error (\%) using the classifier's last layer representation, $R_{2, f_0}$ denotes the classification error (\%) using the classifier's second to last layer representation} }
\label{table:representation}
\end{table}

\subsubsection{Multiaccuracy auditing as diagnostic}
\label{sec:experiments:audit}

As was shown in \cite{ppb}, we've demonstrated that models trained in good faith on unbalanced data may exhibit significant biases on
the minority populations.  For instance, the initial classification error on black females is significant, whereas on white males, it is near $0$.
Importantly, the only way we were able to report these accuracy
disparities was by having access to a rich data set where gender
and race were labeled.
Often, this demographic information will not be available;
indeed, the CelebA images are not labeled with race information,
and as such, we were unable to evaluate the
subpopulation classification accuracy on this set.
Thus, practitioners may be faced with a problem:
even if they know their model is making undesirable mistakes,
it may not be clear if these mistakes are concentrated on
specific subpopulations.
Absent any identification of the subpopulations on which
the model is underperforming, collecting additional training data
may not actually improve performance across the board.

We demonstrate that multiaccuracy auditing may serve as
an effective diagnostic and interpretation tool to help developers identify systematic biases in their models.
The idea is simple: the auditor returns a
hypothesis $h$ that essentially ``scores'' individual inputs
$x$ by how wrong the prediction $f_0(x)$ is. If we consider the magnitude of their scores $\card{h(x)}$, then we may understand better the
biases that the encoder is discovering.

As an example, we test this idea on the PPB data set, evaluating the test images'
representations with the hypotheses the auditor returns.
In Figure~\ref{fig:discovery}, we display the images in the test set that get the highest and lowest effect ($\card{h(x)}$ large and $\card{h(x)} \approx 0$, respectively) according to the first and second hypothesis returned by $\A_\ell$.
In the first round of auditing, the three highest-scoring images (top-left row) are all women, both black and white.
Interestingly, all of the least active images (bottom-left row) are men in suits, suggesting that suits may be a highly predictive feature of being a man according to the original classifier, $f_0$. Overall the first round of audit seems to primarily identify gender as the axis of bias in $f_0$. In the second round, after the classifier has been improved by one step of \alg, the auditor seems to hone in on the  ``dark-skinned women'' subpopulation as the region of bias, as the highest activating images (top-right row) are all dark-skinned women.

\begin{figure}[ht!]
\centering
\includegraphics[width=0.48\linewidth]{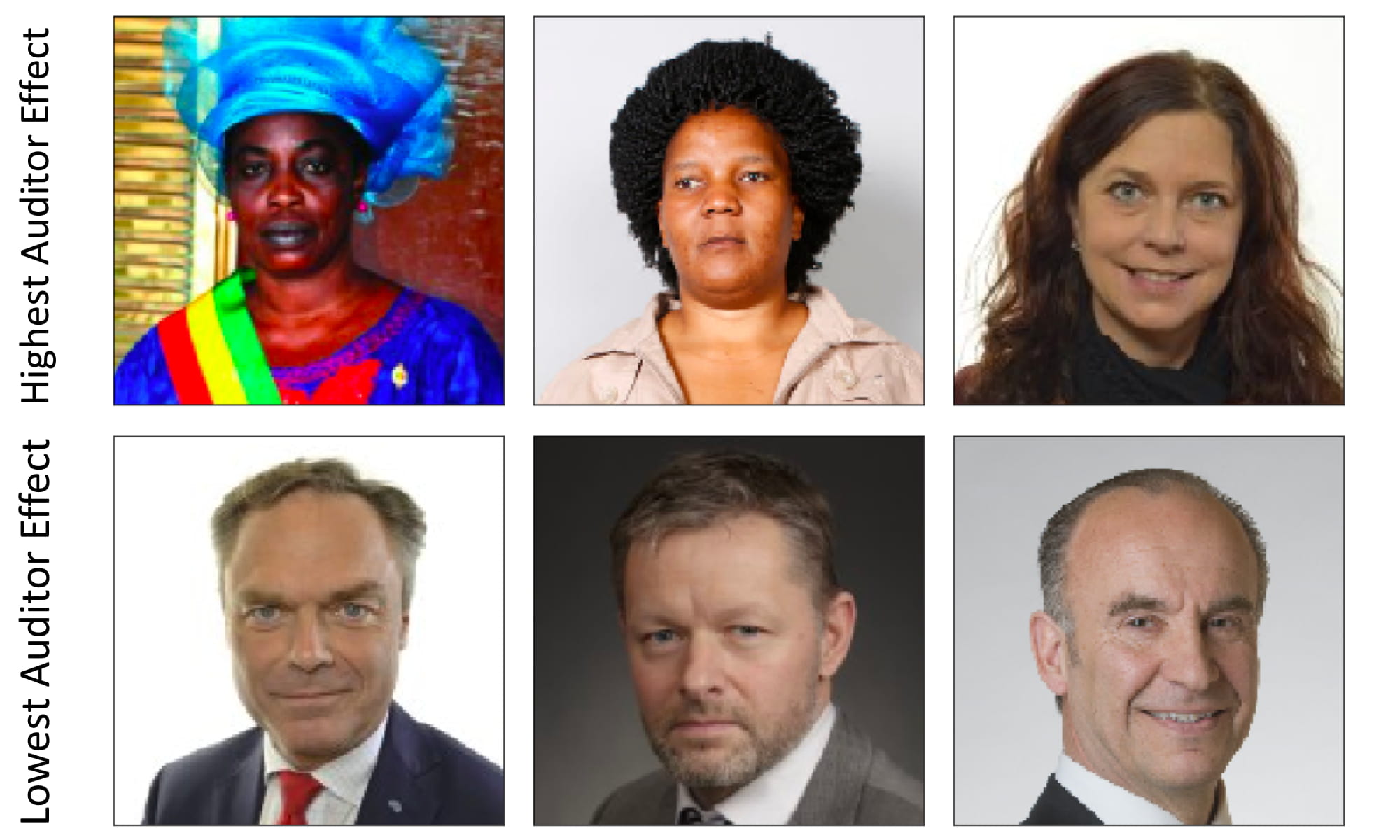}
\includegraphics[width=0.48\linewidth]{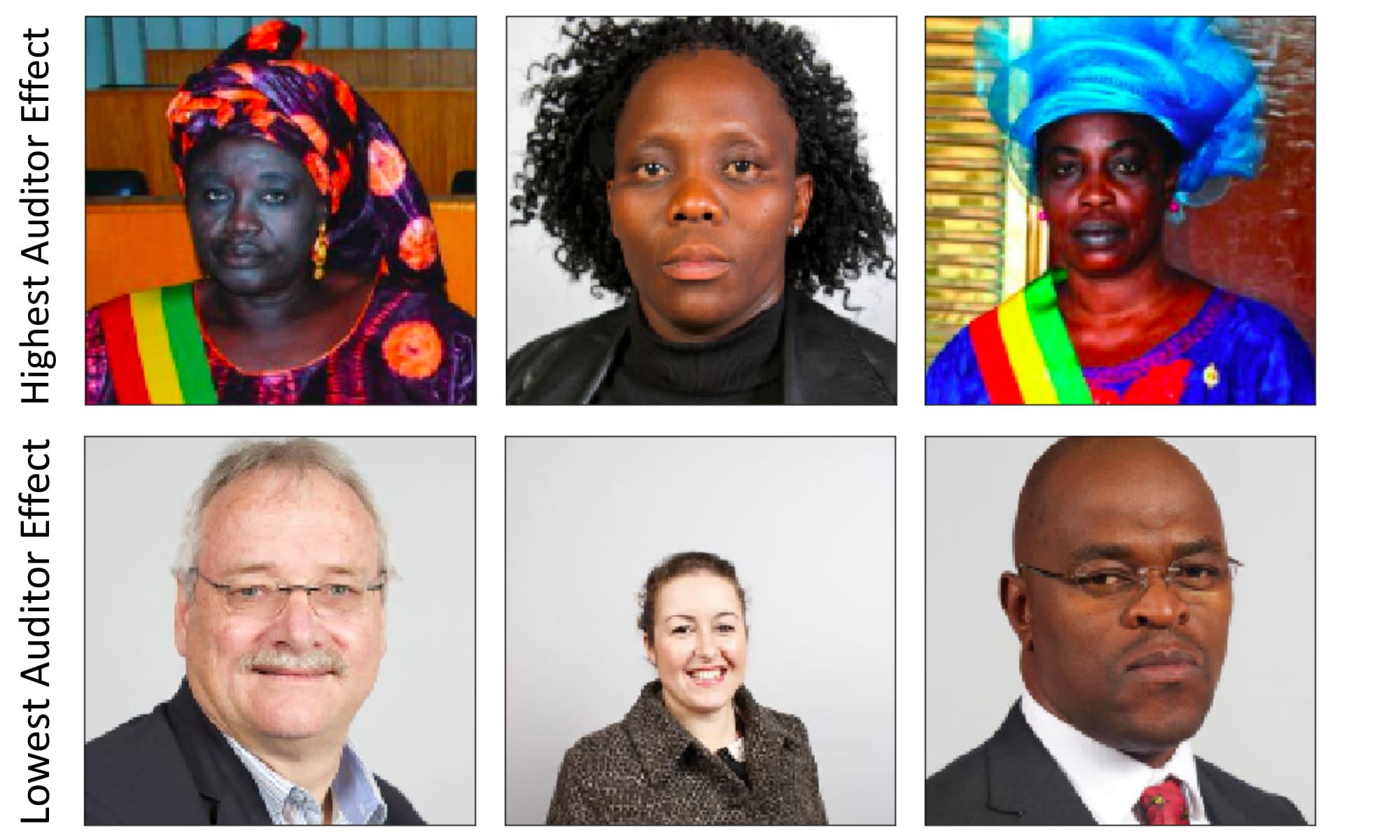}
\caption{\textbf{Interpreting Auditors} 
{\rm Here, we depict the PPB test images with the highest and lowest activation of the first and second trained auditor. The images with the highest auditor effects corresponds to images where the auditor detects the largest bias in the classifier. In the first round of auditing, the most biased images are women, both black and white. In the second round of auditing, after the base classifier has been augmented by one step of \alg, the most biased images are more specifically dark-skinned women.}
\label{fig:discovery}}
\end{figure}

\subsection{Additional case studies}
Multiaccuracy auditing and post-processing is applicable broadly in supervised learning tasks, not just in image classification applications.
We demonstrate the effectiveness of \alg~ in two other settings:
the adult income prediction task and a semi-synthetic disease prediction task.

\paragraph{Adult Income Prediction}
For the first case study, we utilize the adult income prediction
data set ~\cite{adult} with 45,222 samples and 14 attributes (after removing subjects with unknown attributes) for the
task of binary prediction of income more than \$50k for the two major groups of Black and White.  We remove the sensitive features of gender -- female (\textbf{F}) and male (\textbf{M}) and race (for the two major groups) -- black (\textbf{B}) and white (\textbf{W}) -- from the data, to simulate settings where sensitive features are not available to the algorithm training.
We trained a base algorithm, $f_0$, which is a neural network with two hidden layers on 27,145 randomly selected individuals. The test set consists of an independent set of 15,060 persons.

We audit using a decision tree regression model
(max depth $5$) $\A_{\text{dt}}$ to fit the residual $f(x) - y(x)$.  $\A_{\text{dt}}$ receives samples of
validation data drawn from the same distribution as training;
that is $\D = \D_0$.
In particular, we post-process with 3,017 individuals
sampled from the same adult income dataset (disjoint from the training set of $f_0$).
The auditor is given the same features as the original prediction model, and thus, is not given the gender or race of any individual.
We evaluate the post-processed classifier on the same independent test set.  
\alg~ converges in 50 iterations with $\eta = 1$.

As a baseline, we trained four separate  neural networks with the same architecture as before (two hidden layers) for each of the four subgroups using the audit data.
As shown in Table~\ref{table:adult}, multiaccuracy post-processing achieves better accuracy both in aggregate and for each of the subgroups.
Importantly,
the subgroup-specific models requires explicit access to the sensitive features of gender and race.
Training a classifier for each subgroup, or explicitly adding subgroup accuracy into the training objective, assumes that the subgroup is already identified in the data. This is not feasible in the many applications where, say, race or more granular categories are not given.
Even when the subgroups are identified, we often do not have enough samples to train accurate classifiers on each subgroup separately.
This example illustrates that multiaccuracy can help to boost the overall accuracy of a black-box predictor in a data efficient manner.

\begin{table}[h]
\centering
\begin{tabular}{c c c c c c c c c c c c}

 &\textbf{All} & \textbf{F} & \textbf{M} & \textbf{B} & \textbf{W} & \textbf{BF} & \textbf{BM} & \textbf{WF} & \textbf{WM} \\
\hline \hline
$\D$ & 100.0 & 32.3 & 67.7 & 90.3 & 9.7 & 4.8 & 4.9 & 27.4 & 62.9\\
\hline
$f_0$ & 19.3 & 9.3 & 24.2 & 10.5 & 20.3 & 4.8 & 15.8 & 9.8 & 24.9 \\
MA & 14.7 & 7.2 & 18.3 & 9.4 & 15.0 & 4.5 & 13.9 & 7.3 & 18.3 \\
SS & 19.7 & 9.5 & 24.6 & 10.5 & 19.9 & 5.5 & 15.3 & 10.2 & 25.3\\
\end{tabular}
\caption{\textbf{Results for Adult Income Data Set}
{\rm $\D$ denotes the percentages of each population in the data distribution; $f_0$ denotes the classification error (\%) of the initial predictor; MA denotes the classification error (\%) of the model after post-processing with \alg; SS denotes the classification error (\%) of the subgroup-specific models trained separately for each population.}}
\label{table:adult}
\end{table}

\subsubsection{Semi-Synthetic Disease Prediction}
We design a disease prediction task based on real individuals, where the phenotype to disease relation is designed to be different for different subgroups, in order to simulate a challenging setting. 
We used 40,000 individuals sampled from the UK Biobank~\cite{biobank}. Each individual contains 60 phenotype features. To generate a synthetic disease outcome for each subgroup, we divided the data set into four groups based on gender -- male (\textbf{M}) and female (\textbf{F}) -- and age -- young (\textbf{Y}) and old (\textbf{O}). For each subgroup, we create synthetic binary labels using a different polynomial function of the input features with different levels of difficulty. The polynomial function orders are 1, 4, 2, and 6 for OF, OM, YF, and YM subgroups respectively.

For $f_0$, we trained a  neural network with two hidden layers on 32,000 individuals, without using the gender and age features. Hyperparameter search was done for the best weight-decay and drop-out parameters.
The $f_0$ we discover performs moderately well on every subpopulation, with the exception of old females (\textbf{OF}) where the classification error is significantly higher.  Note that this subpopulation had the least representation in $\D_0$.  Again, we audit using $\A_{\text{dt}}$
to run decision tree regression with validation data samples drawn from $\D = \D_0$.  Specifically, the auditor receives a sample of
4,000 individuals without the gender or age features.
As a baseline, we trained a separate classifier for each of
the subgroups using the same audit data.
As Table~\ref{table:ukb} shows, \alg~ significantly lowers the classification error in the old female population.

\begin{table}[h]
\centering
\begin{tabular}{ c c c c c c c c c c c c c}

&\textbf{All} & \textbf{F} & \textbf{M} & \textbf{O} & \textbf{Y} & \textbf{OF} & \textbf{OM} & \textbf{YF} & \textbf{YM}\\
\hline \hline
$\D$ & 100 & 39.6 & 60.4 & 34.6 & 65.4 & 15.0 & 19.7 & 24.6 & 40.7\\
\hline
$f_0$ & 18.9 & 29.4 & 12.2 & 21.9 & 17.3 & 36.8 & 10.9 & 24.9 & 12.8\\
MA &  16.0 & 24.1 & 10.7 & 16.4 & 15.7 & 26.5 & 9.0 & 22.7 & 11.6\\
SS & 19.5 &	32.4 & 11.0  & 22.1 &	18.1 & 37.6 & 10.3 & 29.3 & 11.3\\

\end{tabular}
\caption{\textbf{Results for UK Biobank semi-synthetic data set.}
{\rm $\D$ denotes the percentages of each population in the data distribution; $f_0$ denotes the classification error (\%) of the initial predictor; MA denotes the classification error (\%) of the model after post-processing with \alg; SS denotes the classification error (\%) of the subgroup-specific models trained separately for each population.}}
\label{table:ukb}
\end{table}

\section{Discussion}
In this work, we propose multiaccuracy auditing and post-processing as a method for improving the fairness and accountability of black-box prediction systems.
Here, we discuss how our work compares to prior works, specifically, how it fits into the growing literature on fairness for learning systems.  We conclude with further discussion of our results and speculation about future investigations.

\subsection{Related works}
Many different notions of fairness have been proposed in literature on
learning and classification
\cite{fta,hardt2016equality,zemel2013learning,dwork2017decoupled,multi,kearns2017preventing,tatsu,ftba,RothlbumY18}. Many of these works encode some notion of parity, e.g.~different subgroups should have similar false positive rates,
as an explicit objective/constraint in the training of the original classifier.  The fairness properties are viewed as
constraints on the classifier that ultimately
\emph{limit the model's utility}.  A common belief is
that in order to achieve equitable treatment for
protected subpopulations, the performance on other
subpopulations necessarily must degrade.

A notable exception to this pattern is the work of H\'{e}bert-Johnson \emph{et al.}~\cite{multi},
which introduced a framework for achieving fairness notions that
aim to provide accurate predictions for many important subpopulations.  
\cite{multi} introduced the
notion of \emph{multiaccuracy}\footnote{\cite{multi} refers to this notion as ``multi-accuracy-in-expectation''.} and a stronger notion, dubbed
\emph{multicalibration}, in the context of regression tasks.
Multicalibration guarantees (approximately) calibrated
predictions, not just overall, but on a rich class of structured ``identifiable" subpopulations.
\cite{multi} provides theoretical algorithms for achieving multiaccuracy and
multicalibration, and shows how to post-process a model to achieve
multicalibration
in a way that \emph{improves} the regression objective
across all subpopulations (in terms of squared-error).
Our work directly extends the approach of \cite{multi},
adapting their work to the binary
classification setting.
Our post-processing algorithm, \alg, builds on the algorithm
given in \cite{multi}, providing the additional ``do-no-harm''
property. This property guarantees that if the initial predictor $f_0$
has small classification error on some identifiable group, then
the resulting post-processed model will also have small classification
error on this group.

Independent work of
Kearns~\emph{et al.} \cite{kearns2017preventing} also
investigated how to achieve statistical fairness
guarantees, not just for traditionally-protected
groups, but on rich families of subpopulations.
\cite{kearns2017preventing} proposed
a framework for \emph{auditing} and learning models to achieve
fairness notions like statistical parity and
equal false positive rates.
Both works \cite{multi,kearns2017preventing} connect the task of learning a model that
satisfies the notion of fairness to the task of (weak)
agnostic learning \cite{kearns,agnostic,kalai2008agnostic,feldman2010distribution}.  \cite{kearns2017preventing} also reduces the problem of learning a classifier satisfying parity-based notions of fairness across subgroups to the problem of auditing; it would be interesting if their notion of auditing can be used by humans as a way to diagnose systematic discrimination.

Our approach to post-processing, which uses a learning algorithm
as a fairness auditor, is similar in spirit to the approach to learning
of \cite{kearns2017preventing}, but differs
technically in important ways.
In particular, in the framework of \cite{kearns2017preventing},
the auditor is used during (white-box) training to \emph{constrain} the model
selected from a pre-specified hypothesis class; ultimately,
this constrains the accuracy of the predictions.  In our
setting (as in \cite{multi}), we do not restrict ourselves to an
explicitly-defined hypothesis class, so we can augment the current model using the auditor; these
augmentations \emph{improve} the accuracy of the model.

Indeed, at a technical level, our post-processing algorithm is most similar to work on boosting \cite{freund1995desicion,schapire2012boosting}, specifically, gradient boosting \cite{mason2000boosting,friedman2001greedy}.  Still, our perspective is quite different from the standard boosting setting.
Rather than using an expressive class of predictors as the base classifiers to be able to learn the function directly, our setting focuses on the regime where data is limited and we must restrict our attention to simple classes.  Thus, it becomes important that we leverage the expressiveness (and initial accuracy) of $f_0$ if we are to obtain strong performance using the multiaccuracy approach.  Further, the termination of \alg~ certifies that the final model satisfies $(\A,\alpha)$-multiaccuracy; in general, standard boosting algorithms will not provide such a certificate.

Motivated by unfairness that arises as the result of feedback loops in classification settings, another recent work of Hashimoto \emph{et al.}~\cite{tatsu}
aims to improve fairness at a subpopulation level.  Specifically, their notion of fairness similarly aims to give accurate (i.e. bounded loss) predictions not just overall, but on \emph{all}
significant subpopulations.
In the multiaccuracy setting, we argued that this goal was information-theoretically infeasible; \cite{tatsu}
sidesteps this impossibility by optimizing over a fixed hypothesis class, and
formulating the problem as a min-max optimization.
They give show how to relax the problem of minimizing the worst-case
subpopulation loss and reduce the relaxation to a certain robust optimization problem.
While their approach does not guarantee optimality, it gives a strong certificate upper-bounding the maximum loss over all subpopulations.

A different approach to subgroup fairness is studied by Dwork~\emph{et al.}
\cite{dwork2017decoupled}.  This work investigates the
question of how to learn a ``decoupled'' classifier, where separate
classifiers are learned for each subgroup and then combined to achieve
a desired notion of fairness.
While applicable in some settings, at times,
this approach may be untenable.
First, decoupling the classification problem requires that we have race, age, and other attributes of interest in the dataset and that the groups we wish to protect are partitioned by these attributes; this information is often not available.
Even if this information is available, \emph{a priori},
it may not always be obvious which subpopulations
require special attention.
In contrast, the multiaccuracy approach allows
us to protect a rich class of overlapping subpopulations without explicit knowledge
of the vulnerable populations. An interesting direction for future investigation could try to pair multiaccuracy auditing (to identify subpopulations in need of protection) with the decoupled classification techniques of \cite{dwork2017decoupled}.

The present work, along with \cite{multi,kearns2017preventing,ftba},
can be viewed as studying information-fairness tradeoffs in prediction
tasks (i.e.~strengthening the notion of fairness that can be guaranteed
using a small sample).  These works fit into the larger literature
on fairness in learning and prediction tasks
\cite{fta,zemel2013learning,ppb,hardt2016equality,dwork2017decoupled,ftba,RothlbumY18}, discussions of the utility-fairness tradeoffs in
fair classification \cite{propublica,kleinberg2016inherent,chouldechova2017fair,chouldechova2017fairer,corbett2017algorithmic,pleiss2017fairness}.
While fairness and accountability serve as the main motivations for developing the multiaccuracy framework, our results may have broader interest.
In particular, multiaccuracy post-processing may be applicable in domain adaptation settings, particularly under label distribution shift as studied recently in \cite{liptonWS18}, but when the learner gets a small number of labeled samples from the new distribution.

\subsection{Conclusion}

The multiaccuracy framework can be applied very broadly; importantly, we can post-process any initial model $f_0$ given only black-box access to $f_0$ and a small set of labeled validation data.
We show that in a wide range of realistic settings, post-processing for multiaccuracy helps to mitigate systematic biases in predictors across sensitive subpopulations, even when the identifiers for these subpopulations are not given to the auditor explicitly.
In our experiments, we observe that standard supervised learning optimizes for overall performance, leading to settings where certain subpopulations incur substantially worse error rates.
Multiaccuracy provides a framework for fairness in classification by improving the accuracy in identifiable subgroups,
in a way that suffers no tradeoff between accuracy and utility.  
We demonstrate -- both theoretically
and empirically -- that post-processing with \alg~serves
as an effective tool for improving the
accuracy across important subpopulations,
and does not harm the populations
that are already classified well.

Multiaccuracy works to the extent that the auditor can effectively identify specific subgroups where the original classifier $f_0$ tends to make mistakes. The power of multiaccuracy lies in the fact that in many settings, we can identify issues with $f_0$ using a relatively small amount of audit data.
Thus, multiaccuracy auditing is limited: if the mistakes
appear overly-complicated to the bounded auditor
(for information- or complexity-theoretic reasons),
then the auditor will not be able to identify these mistakes.
Our empirical results suggest, however, that in many
realistic settings, the subpopulations on which a
classifier errs are efficiently-identifiable.
This observation may be of interest beyond the
context of fairness.
In particular, our experiments
improving the accuracy of a model trained on
CelebA on the LFW+a and PPB test sets suggests
a lightweight black-box alternative to more
sophisticated transfer learning techniques,
which may warrant further investigation.

Our empirical investigations reveal some additional interesting
aspects of the multiaccuracy
framework.  In particular, we've shown that multiaccuracy auditing can identify
underrepresented groups receiving suboptimal predictions
even when the sensitive attributes
defining these groups are not explicitly given to the auditor, which proves useful for diagnosing where models make mistakes. We feel that it may be of further interest within the study of model interpretability.
Finally, it is striking that \alg~ tends to improve, not just subgroup accuracy, but also the overall accuracy, even when
the minority groups
remain underrepresented in the validation data.
While some of these findings may be due to suboptimal training of our initial models,
we believe this is not the only factor at play.  In particular, we hypothesize that understanding why models incorporate biases during training -- and further, why simple interventions like multiaccuracy post-processing can significantly improve generalization error --
requires investigating the dynamics of overfitting during training, not just on the population as a whole, but across significant subpopulations.

\paragraph{Acknowledgements.}
The authors thank Omer Reingold and Guy N.~Rothblum for their
advice and
helpful discussions throughout the development of this work;
we thank Weihao Kong, Aditi Raghunathan, and Vatsal Sharan for
feedback on early drafts of this work.

\bibliographystyle{alpha}
\bibliography{refs}

\clearpage
\appendix

\paragraph{Appendix notation.}
We use the inner product
$\langle h,g \rangle = \E_{x \sim \D}[h(x)\cdot g(x)]$
and the $p$-norms
$\norm{h}_p = \left(\E_{x\sim \D}[\card{h(x)}^p]\right)^{1/p}$.

\section{Multiaccuracy and classification error}
Here, we prove Proposition~\ref{prop:ac2er}.
\begin{proposition*}[Restatement of Propostion~\ref{prop:ac2er}]
Let $\hat{y}:\X \to \set{-1,1}$ as $\hat{y}(x)=1-2y(x)$.
Suppose that for $S \subseteq \X$ with $\Pr_{x \sim \D}[x \in S] \ge \gamma$,
there is some $c \in \C$
such that $\norm{c - \hat{y}_S}_1 \le \tau$.
Then if $f$ is $(\C,\alpha)$-multiaccurate,
$\er_S(f;y) \le 2\cdot(\alpha + \tau)/\gamma$.
\end{proposition*}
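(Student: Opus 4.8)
The plan is to exploit the fact that, for binary labels, testing $f$ against $\hat{y}_S$ in the multiaccuracy sense measures exactly the expected absolute residual of $f$ on $S$, and then to pass from $\hat{y}_S$ to the nearby test $c \in \C$ using the multiaccuracy guarantee. First I would record the pointwise identity that for every $x \in S$,
\[
\hat{y}(x)\cdot(f(x)-y(x)) = \card{f(x)-y(x)},
\]
which one checks by splitting into the cases $y(x)=0$ and $y(x)=1$, using $\hat{y}=1-2y$ together with $f(x)\in[0,1]$. Since $\hat{y}_S$ vanishes off $S$, summing over the population gives
\[
\E_{x\sim\D}[\hat{y}_S(x)\cdot(f(x)-y(x))] = \Pr_{x\sim\D}[x\in S]\cdot\E_{x\sim S}[\card{f(x)-y(x)}].
\]

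Next I would transfer the $(\C,\alpha)$-multiaccuracy bound from $c$ to $\hat{y}_S$. Since $f(x)-y(x)\in[-1,1]$, the gap between the two tests is controlled by the $L_1$ closeness $\norm{c-\hat{y}_S}_1 \le \tau$: by the triangle inequality,
\[
\card{\E_{x\sim\D}[(c-\hat{y}_S)(f-y)]} \le \E_{x\sim\D}[\card{c-\hat{y}_S}\cdot\card{f-y}] \le \norm{c-\hat{y}_S}_1 \le \tau.
\]
Combining this with multiaccuracy, $\E_{x\sim\D}[c(x)(f(x)-y(x))]\le\alpha$, yields $\E_{x\sim\D}[\hat{y}_S(f-y)] \le \alpha+\tau$. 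Dividing by $\Pr_{x\sim\D}[x\in S]\ge\gamma$ and using the identity from the first step, I obtain $\E_{x\sim S}[\card{f(x)-y(x)}] \le (\alpha+\tau)/\gamma$.

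Finally I would convert the bound on the expected residual into a bound on classification error by a Markov-type argument: whenever $\bar{f}(x)\neq y(x)$, the prediction $f(x)$ lies on the wrong side of $1/2$, which forces $\card{f(x)-y(x)}\ge 1/2$. Hence the misclassified set is contained in $\set{x : \card{f(x)-y(x)}\ge 1/2}$, and
\[
\er_S(f;y) = \Pr_{x\sim S}[\bar{f}(x)\neq y(x)] \le 2\cdot\E_{x\sim S}[\card{f(x)-y(x)}] \le 2(\alpha+\tau)/\gamma,
\]
as claimed. The one conceptual step—and really the only place any insight is required—is the sign identity of the first display, which reinterprets the multiaccuracy bias term as an $L_1$ residual on $S$; everything afterward is the triangle inequality and Markov's inequality. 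The only point needing minor care is that multiaccuracy applies to $c$ rather than to $\hat{y}_S$ directly, but the uniform bound $\card{f-y}\le 1$ makes that transfer immediate.
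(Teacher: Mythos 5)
Your proof is correct and takes essentially the same route as the paper's: both transfer the multiaccuracy bound from $c$ to $\hat{y}_S$ via H\"older's inequality at a cost of $\tau$, and both exploit that $\hat{y}_S(x)\cdot(f(x)-y(x))$ is nonnegative on $S$ and at least $1/2$ on misclassified points. Your packaging of that step as a pointwise identity plus Markov's inequality is just a cleaner rendering of the paper's explicit partition of $S$ into the sets $S_{ij}$; the underlying argument is identical.
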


\begin{proof}
For $i,j \in \set{0,1}$,
let $S_{ij} = \set{x \in S : y(x) = i \land \bar{f}(x) = j}$.
Further denote $\beta_{ij} = \Pr_{x \sim \D}[x \in S_{ij}]$.
Note that the classification error on a set $S$ is
$\er_S(f;y) \le (\beta_{01} + \beta_{10})/\gamma$.

Let $\hat{y}(x) = 1-2y(x)$ and suppose
$c(x) = \hat{y}(x)_S + z(x)$ where $\norm{\delta}_1 \le \tau$.
Then, we derive the following inequality.
\begin{align}
&\quad \E_{x \sim \D}[c(x)\cdot (f(x) - y(x))]\\
&= \E_{x \sim \D}[\hat{y}(x)_S\cdot (f(x)-y(x))] +
\E_{x \sim \D}[z(x)\cdot (f(x)-y(x))]\\
&\ge \beta_{01}\cdot \E_{x \sim S_{01}}[f(x)-y(x)]
+ \beta_{10}\cdot \E_{x \sim S_{10}}[y(x) - f(x)] - \tau\label{ineq:0}
\end{align}
where (\ref{ineq:0}) follows by H\"{o}lder's inequality,
from the fact that the contribution to the expectation
of $(1-2y(x))\cdot(f(x) - y(x))$
from $S_{00}$ and $S_{11}$ is lower bounded by $0$, and
by the definition $\hat{y}_S(x) = 0$
for $x \not \in S$.
Further, because we know any $x \in S_{01} \cup S_{10}$ is misclassified,
we can lower bound the contribution by $1/2$.
Thus, if $\E_{x \sim \D}[c(x) \cdot (f(x)-y(x))] \le \alpha$,
then by rearranging we conclude
\begin{equation}
\er_S(f;y) = (\beta_{01} + \beta_{10})/\gamma \le 2\cdot(\alpha + \tau)/\gamma.
\end{equation}
\end{proof}

Theorem~\ref{thm:donoharm} follows by a similar argument.
\begin{theorem*}[Restatement of Theorem~\ref{thm:donoharm}]
Let $\alpha,\beta,\gamma > 0$
and $S \subseteq \X$ be a subpopulation where
$\Pr_{x\sim \D}[x \in S] \ge \gamma$.
Suppose for 
$\A$ audits the characteristic function $\chi_S(x)$ and its negation.
Let $f:\X\to [0,1]$ be the output of Algorithm~\ref{alg} when
given $f_0:\X \to [0,1]$, $\A$,
and $0 < \alpha \le \beta\gamma$ as input.
Then the classification error of $f$ on the subset
$S$ is bounded as
\begin{equation}
\er_S(f;y) \le 3\cdot\er_S(f_0;y) + 4\beta.
\end{equation}
\end{theorem*}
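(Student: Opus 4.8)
The plan is to mirror the argument behind Proposition~\ref{prop:ac2er}, but to exploit the specific way Algorithm~\ref{alg} partitions $\X$ into $\X_0 = \set{x : f_0(x)\le 1/2}$ and $\X_1 = \set{x : f_0(x) > 1/2}$. The main conceptual point is that the raw bias constraint one gets from auditing $\chi_S$ alone, namely $\card{\E_{x\sim\D}[\chi_S(x)(f(x)-y(x))]}\le\alpha$, does \emph{not} by itself control the classification error on $S$: false positives and false negatives contribute to the residual $f-y$ with opposite signs and can cancel. The partition by $f_0$ is exactly what breaks this cancellation, since every false positive of $f_0$ on $S$ lies in $S\cap\X_1$ and every false negative lies in $S\cap\X_0$; auditing $\chi_S$ separately on each block then isolates each type of newly introduced error.

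First I would extract from the stopping condition the two one-sided inequalities I actually need. Because the algorithm audits each test on the blocks $\X,\X_0,\X_1$, and because $\A$ audits both $\chi_S$ and its negation, termination guarantees (at the population level, with generalization handled as in Theorem~\ref{thm:alg}) that
\[
\E_{x\sim\D}[\chi_{S\cap\X_1}(x)(f(x)-y(x))] \ge -\alpha \quad\text{and}\quad \E_{x\sim\D}[\chi_{S\cap\X_0}(x)(f(x)-y(x))] \le \alpha,
\]
where $\chi_{S\cap\X_b}$ denotes what the test $\chi_S$ (resp.\ $-\chi_S$) computes once the residual is restricted to the block $\X_b$.

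Next I would set up the bookkeeping on each block. On $S\cap\X_1$ I partition by the true label $y$ and the rounded final prediction $\bar{f}$ into four masses; write $a_{10}$ for the mass with $y=1$ but $\bar{f}=0$ (the newly introduced false negatives, the only ``harm'' possible on $\X_1$), $a_{01},a_{00}$ for the two $y=0$ masses, and let $\mathrm{FP}_0$ be the total mass of false positives of $f_0$ on $S$, which all live in $S\cap\X_1$ so that $a_{00}+a_{01}=\mathrm{FP}_0$. Using that any misclassified point has $\card{f(x)-y(x)}\ge 1/2$ while correctly classified points contribute boundedly, I would \emph{upper}-bound the residual expectation on $S\cap\X_1$ by $-a_{10}/2 + a_{01} + a_{00}/2$ and combine with the \emph{lower} bound $-\alpha$ above to get $a_{10}\le 2\alpha + 2\,\mathrm{FP}_0$; the symmetric sign analysis on $S\cap\X_0$ (now combining a lower bound on the residual expectation with $\le\alpha$) yields $b_{01}\le 2\alpha + 2\,\mathrm{FN}_0$ for the newly introduced false positives. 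The inherited errors of $f$ are trivially controlled by $f_0$, namely $a_{01}\le\mathrm{FP}_0$ and $b_{10}\le\mathrm{FN}_0$. Adding the four contributions, the total error mass of $f$ on $S$ is $a_{01}+a_{10}+b_{01}+b_{10}\le 3(\mathrm{FP}_0+\mathrm{FN}_0)+4\alpha$, i.e.\ at most $3$ times the error mass of $f_0$ plus $4\alpha$; dividing by $\Pr_{x\sim\D}[x\in S]\ge\gamma$ and using $\alpha\le\beta\gamma$ turns the additive $4\alpha/\gamma$ into $4\beta$ and gives $\er_S(f;y)\le 3\,\er_S(f_0;y)+4\beta$.

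The step I expect to be the crux is the third one: recognizing that net-bias multiaccuracy is too weak on its own, and that the right fix is to audit $\chi_S$ on the $f_0$-induced partition so that each direction of newly introduced error is confined to a single block and can be charged against $\alpha$ plus the original error of $f_0$. Once the partition is in place, the sign analysis is routine, essentially identical to Proposition~\ref{prop:ac2er} via the observation that errors satisfy $\card{f-y}\ge 1/2$.
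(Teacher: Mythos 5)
Your proposal is correct and follows essentially the same route as the paper's proof: both exploit the $f_0$-induced partition so that false positives of $f_0$ on $S$ are confined to $S\cap\X_1$ and false negatives to $S\cap\X_0$, use the audited tests $\pm\chi_S$ on each block to get one-sided constraints, lower-bound misclassified residuals by $1/2$, charge the opposite-sign contributions to $f_0$'s own errors, and recombine to get $3\cdot\er_S(f_0;y)+4\beta$. The only difference is cosmetic bookkeeping — you track unconditional error masses and divide by $\Pr_{x\sim\D}[x\in S]\ge\gamma$ at the end, whereas the paper works with per-block conditional error rates $\tau_0,\tau_1$ and takes a convex combination — and the constants come out identically.
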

\begin{proof}
Suppose that $\er_S(f_0;y) \le \tau$.
Consider $S_1 = \set{x \in S: f_0(x) > 1/2}$;
suppose $\er_{S_1}(f_0;y) = \tau_1$.
By assumption, $-\chi_S(x)$ is audited on $\X_1$.
Consider
$\E_{x \sim S_1}[-\chi_S(x) \cdot (f(x) - y(x))]$.
\begin{align}
&\quad \E_{x \sim S_1}[-\chi_S(x)\cdot (f(x) - y(x))]\\
&=
\E_{x \sim S_1}[y(x) - f(x)]\\
&= \Pr_{x \sim S_1}[y(x) = 1]\cdot \E_{\substack{x \sim S_1\\y(x) = 1}}[1 - f(x)]
- \Pr_{x \sim S_1}[y(x) = 0]\cdot \E_{\substack{x\sim S_1\\y(x) = 0}}[f(x)]\\
&\ge \Pr_{x \sim S_1}[y(x) = 1 \land \bar{f}(x) = 0]
\cdot \E_{\substack{x \sim S_1\\y(x) = 1\\ 
\bar{f}(x) = 0}}[1 - f(x)] - \tau_1 \label{ineq:donoharm:1}\\
&\ge \frac{1}{2}\Pr_{x \sim S_1}[y(x) = 1 \land \bar{f}(x) = 0] - \tau_1 \label{ineq:donoharm:2}
\end{align}
where (\ref{ineq:donoharm:1}) follows from applying
H\"older's inequality and the assumption that
$\er_{S_1}(f_0;y) = \tau_1$; and (\ref{ineq:donoharm:2})
follows from lower bounding the contribution to the
expectation based on the true label and the predicted
label.
Note that $\Pr_{x \sim S}[x \in S_1] \cdot
\E_{x \sim S_1}[y(x) - f(x)] \le \alpha/\gamma = \beta$
by the fact that $f$ passes multiaccuracy auditing
by $\A$ and the assumption that $\Pr_{x\sim \D}[x \in S]
\ge \gamma$.  Rearranging gives the following inequality
\begin{align}
\er_{S_1}(f;y) &\le
\frac{2\beta}{\Pr_{x \sim S}[x \in S_1]} + 3\tau_1
\end{align}
where the additional $\tau_1$ comes from
accounting for the false positives.

A similar argument holds for $S_0$
with $\er_{S_0}(f_0;y) = \tau_0$,
using $\chi_S(x)$.
We can expand $\er_{S}(f;y)$ as a convex combination
of the classification error over $S_0$ and $S_1$.
\begin{align}
&~~\quad \er_S(f;y)\\ &= \Pr_{x \sim S}[x \in S_0]\cdot \er_{S_0}(f;y) 
+ \Pr_{x \sim S}[x \in S_1] \cdot \er_{S_1}(f;y)\\
&\le \Pr_{x \sim S}[x \in S_0]\cdot
\Pr_{x \sim S_0}[y(x) \neq \bar{f}(x)]
+ \Pr_{x \sim S}[x \in S_1]\cdot
\Pr_{x \sim S_1}[y(x) \neq \bar{f}(x)]\\
&\le \Pr_{x \sim S}[x \in S_0]\cdot
\left(3\tau_0 + \frac{2\beta}{\Pr_{x \sim S}[x \in S_0]}\right)\notag\\
&\qquad\qquad\qquad\qquad + \Pr_{x \sim S}[x \in S_1]\cdot
\left(3\tau_1 + \frac{2\beta}{\Pr_{x \sim S}[x \in S_1]}\right)\\
&=3\cdot\left(\Pr_{x\sim S}[x \in S_0]\cdot \tau_0
+ \Pr_{x \sim S}[x \in S_1]\cdot \tau_1\right) + 4 \beta\\
&\le 3\tau + 4\beta
\end{align}
by the fact that $S$ is partitioned into $S_0$ and
$S_1$ and $\tau$ is a corresponding convex combination of
$\tau_0$ and $\tau_1$.
\end{proof}

\section{Analysis of Algorithm~\ref{alg}}
\label{app:mw}
Here, we analyze the sample complexity and running
time of Algorithm~\ref{alg}.
\begin{theorem*}[Restatement of Theorem~\ref{thm:alg}]
Let $\alpha, \delta > 0$ and suppose $\A$ agnostic learns
a class $\C \subseteq [-1,1]^\X$ of dimension $d(\C)$.
Then, using $\eta = O(\alpha)$,
Algorithm~\ref{alg} converges to a
$(\C,\alpha)$-multiaccurate hypothesis $f_T$ in
$T = O\left(\frac{\ell_\D(f_0;y)}{\alpha^2}\right)$
iterations from $m = \tilde{O}\left(T \cdot \frac{d(\C) + \log(1/\delta)}{\alpha^2}\right)$
samples with probability at least $1-\delta$ over the random samples.
\end{theorem*}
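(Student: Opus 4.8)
The plan is to treat the cross-entropy loss $\ell_\D(f_t;y)$ as a potential function and to show that every non-terminating iteration decreases this potential by $\Omega(\alpha^2)$. Since $\ell_\D(f;y)\ge 0$ always and the initial value is $\ell_\D(f_0;y)$, this caps the iteration count at $T=O(\ell_\D(f_0;y)/\alpha^2)$. The stopping condition guarantees that $f_T$ passes $(\A,\alpha)$-multiaccuracy auditing on the empirical sample; combined with the agnostic-learning hypothesis for $\A$ and a uniform-convergence argument, this yields $(\C,\alpha)$-multiaccuracy on $\D$. The sample-complexity claim then follows by controlling the empirical-to-population gap at all iterations simultaneously.

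The heart of the argument is a per-step progress lemma. Fixing iteration $t$ with update direction $h=h_{t,S^*}$ (extended by $0$ off $S^*$), I would compute the pointwise change in loss under the multiplicative-weights update $f_{t+1}(x)=e^{-\eta h(x)}f_t(x)/Z(x)$, where $Z(x)=e^{-\eta h(x)}f_t(x)+(1-f_t(x))$. Writing $p=f_t(x)$, this gives the exact identity
\begin{equation}
\ell_x(f_t;y) - \ell_x(f_{t+1};y) = -\eta\,h(x)\,y(x) - \log Z(x).
\end{equation}
The second-order bound $e^{-\eta h}-1 \le -\eta h + \eta^2 h^2$ (valid since $|\eta h|\le\eta=O(\alpha)$ is small) together with $\log(1+u)\le u$ gives $\log Z(x)\le f_t(x)\bigl(-\eta h(x)+\eta^2 h(x)^2\bigr)$, and hence pointwise
\begin{equation}
\ell_x(f_t;y) - \ell_x(f_{t+1};y) \ge \eta\,h(x)\,(f_t(x)-y(x)) - \eta^2 h(x)^2.
\end{equation}
Taking expectations over $x\sim\D$ and using $\norm{h}_2\le 1$ yields $\ell_\D(f_t;y)-\ell_\D(f_{t+1};y)\ge \eta\,\langle h,f_t-y\rangle-\eta^2$. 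Because the algorithm did not stop, the \emph{empirical} correlation exceeds $\alpha$; uniform convergence transfers this to a \emph{population} correlation $\langle h,f_t-y\rangle\ge \alpha/2$, so with $\eta=\Theta(\alpha)$ the per-step decrease is at least $\eta\alpha/2-\eta^2=\Omega(\alpha^2)$, as required.

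For the sample complexity I would invoke the definition of $d(\C)$: a fresh sample $D_t$ of size $\tilde{O}((d(\C)+\log(T/\delta))/\alpha^2)$ guarantees, with failure probability $\delta/T$, that $\E_{x\sim D_t}[c(x)(f_t(x)-y(x))]$ is within $\alpha/2$ of its population value \emph{simultaneously for all} $c\in\C$ (the multiplier $f_t-y\in[-1,1]$ only rescales the class). The structural point making this clean is that the algorithm draws a \emph{fresh} independent $D_t$ each round, so $f_t$—a function of $D_0,\dots,D_{t-1}$—is independent of $D_t$; conditioning on $f_t$, non-adaptive uniform convergence applies, and a union bound over the $T$ rounds costs only the $\log(T/\delta)$ term. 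This supplies both the continuation guarantee used above and, at termination, that the best $c\in\C$ has population correlation $O(\alpha)$ with the final residual (via the $S=\X$ audit and the agnostic-learning accuracy of $\A$), i.e.\ $(\C,\alpha)$-multiaccuracy. Summing fresh per-iteration samples over $T$ rounds gives $m=\tilde{O}\!\left(T\cdot(d(\C)+\log(1/\delta))/\alpha^2\right)$.

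The main obstacle I expect is making the per-step progress bound hold simultaneously (i) in terms of the \emph{population} loss, whose monotone decrease drives the iteration count, while (ii) the update and stopping decisions are made from finite samples. Resolving this requires threading the $\alpha/2$ uniform-convergence slack through both the continuation case (empirical correlation $>\alpha$ must force population correlation $\ge\alpha/2$ for genuine progress) and the termination case (empirical correlation $\le\alpha$ must force population multiaccuracy), and then balancing $\eta=\Theta(\alpha)$ against the $-\eta^2\norm{h}_2^2$ error term so the net population decrease stays $\Omega(\alpha^2)$. The remaining pieces—the concentration bound defining $d(\C)$ and the union bound over iterations—are routine given the fresh-sampling design.
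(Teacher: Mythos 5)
Your proposal is correct and follows essentially the same route as the paper's proof: the paper also uses expected cross-entropy as the potential, derives the identical per-step bound $\ell_\D(f_t;y)-\ell_\D(f_{t+1};y)\ge \eta\langle h, f_t-y\rangle - \eta^2$ from a second-order bound on the multiplicative-weights update (written there via the odds ratio $q_t(x)=f_t(x)/(1-f_t(x))$ rather than your normalizer $Z(x)$, but the computation is the same), and handles the sample complexity exactly as you do, by fresh per-iteration samples, uniform convergence over $\C$ at accuracy $\Theta(\alpha)$, and a union bound over the $T$ rounds. The only deviations are constant-factor bookkeeping (the paper updates when the estimated correlation exceeds $3\alpha/4$ with $\alpha/4$-accurate estimates, so termination certifies $(\C,\alpha)$-multiaccuracy exactly, whereas your $\alpha$-threshold with $\alpha/2$ slack needs the trivial rescaling you already flag).
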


\subsection{Sample complexity}
We essentially assume the sample complexity issues away by working
with the notion of dimension.  We give an example proof outline of a standard
uniform convergence argument using metric entropy
as in \cite{boucheron2013concentration}.
\begin{lemma}
Suppose $\C \subseteq [-1,1]^\X$ has $\eps$-covering number $N_\eps = {\cal N}(\eps,\C,\norm{\cdot}_1)$.
Then, with probability at least $1-\delta$,
\begin{equation}
\card{\frac{1}{m}\sum_{i =1}^m\left(c(x_i)y(x_i)\right) - 
\E_{x \sim \D}[c(x)y(x)]} \le O\left(\alpha\right)
\end{equation}
provided $m \ge \tilde{\Omega}\left(\frac{\log(N_{\Theta(\alpha)}/\delta)}{\alpha^2}\right)$.
\end{lemma}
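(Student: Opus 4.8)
The plan is to treat this as a textbook uniform-convergence statement over the induced function class $\mathcal{G} = \set{x \mapsto c(x)\,y(x) : c \in \C}$, and to bound $\sup_{c \in \C}\card{\frac1m\sum_i c(x_i)y(x_i) - \E_{x\sim\D}[c(x)y(x)]}$ via an $\eps$-net together with Hoeffding's inequality. Since $y(x) \in \set{0,1} \subseteq [-1,1]$, every $g_c \in \mathcal{G}$ is bounded in $[-1,1]$, and for any $c,c'$ we have $\norm{g_c - g_{c'}}_1 = \E_{x\sim\D}[\card{y(x)}\cdot\card{c(x)-c'(x)}] \le \norm{c-c'}_1$; hence any $\eps$-cover of $\C$ in $\norm{\cdot}_1$ transfers to an $\eps$-cover of $\mathcal{G}$ of size at most $N_\eps$. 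I would set $\eps = \Theta(\alpha)$ throughout.

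The core estimate is the net step. First I would fix a minimal $\eps$-cover $\C_\eps$ with $\card{\C_\eps} = N_\eps$ and, for each fixed $c' \in \C_\eps$, apply Hoeffding to the i.i.d.\ bounded variables $g_{c'}(x_i)$, giving $\Pr[\card{\hat{E}g_{c'} - \E g_{c'}} > t] \le 2\exp(-mt^2/2)$. A union bound over the $N_\eps$ net points shows that, with probability at least $1 - 2N_\eps\exp(-mt^2/2)$, every net point satisfies $\card{\hat E g_{c'} - \E g_{c'}} \le t$. Choosing $t = \Theta(\alpha)$ and forcing the failure probability below $\delta$ gives exactly $m \ge \Omega\!\left(\log(N_\eps/\delta)/\alpha^2\right)$, the claimed sample complexity.

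It remains to pass from the net back to an arbitrary $c \in \C$ by the triangle inequality through its nearest neighbor $\pi(c) \in \C_\eps$. The population approximation error is immediate: $\card{\E g_c - \E g_{\pi(c)}} \le \norm{g_c - g_{\pi(c)}}_1 \le \eps$. The hard part will be the \emph{empirical} approximation error $\card{\hat E g_c - \hat E g_{\pi(c)}} \le \frac1m\sum_i \card{c(x_i) - \pi(c)(x_i)}$, which is an empirical $L_1$ distance and need not equal the population distance $\norm{c - \pi(c)}_1 \le \eps$ that the cover controls. I would resolve this with a second, parallel uniform-convergence bound applied to the nonnegative class of difference functions $\set{\card{c - c'} : c \in \C,\ c' \in \C_\eps}$, which is bounded in $[0,2]$ and whose $\norm{\cdot}_1$-covering number is at most $N_{\Theta(\eps)}^2$ (fixing $c'$, the map $c \mapsto \card{c-c'}$ is $1$-Lipschitz in $\norm{\cdot}_1$, so each of the $N_\eps$ choices of $c'$ contributes a factor $N_{\Theta(\eps)}$); this costs only a constant factor inside the logarithm. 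Hoeffding plus a union bound over this cover then yields $\frac1m\sum_i \card{c(x_i) - \pi(c)(x_i)} \le \eps + t$ uniformly, so combining the three pieces gives total deviation $\eps + t + (\eps + t) = O(\alpha)$. The $\tilde\Omega$ absorbs the constant blow-up in $\log N$ and the choice $\eps = \Theta(\alpha)$, completing the bound.
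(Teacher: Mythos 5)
You have correctly isolated the genuine difficulty in this statement---one the paper itself glosses over: a cover in the population norm $\norm{\cdot}_1 = \E_{x\sim\D}\card{\cdot}$ controls the \emph{population} approximation error $\card{\E g_c - \E g_{\pi(c)}}$, but says nothing a priori about the \emph{empirical} approximation error $\frac{1}{m}\sum_i \card{c(x_i)-\pi(c)(x_i)}$. The problem is that your proposed repair is circular. Your ``second, parallel uniform-convergence bound'' for the difference class $\set{\card{c-c'} : c \in \C,\ c' \in \C_\eps}$ is itself established by Hoeffding plus a union bound over a population-$L_1$ cover of that class; but Hoeffding only controls the finitely many cover points, and passing from a cover point to an arbitrary difference function $\card{c-\pi(c)}$ requires controlling exactly the same kind of empirical-$L_1$ proximity you set out to bound in the first place. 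You have pushed the gap one level down, not closed it, and iterating the construction never terminates because the uncontrolled term does not shrink from one level to the next.

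This gap is not a removable technicality: uniform convergence simply does not follow from population-$L_1$ covering numbers alone. Take $\D$ uniform on $[0,1]$, $y \equiv 1$, and $\C = \set{\chi_F : F \subset [0,1],\ F \text{ finite}}$. Every member is at population $L_1$ distance $0$ from $\chi_\emptyset \equiv 0 \in \C$, so $N_\eps = 1$ for every $\eps > 0$, yet on any sample one has $\frac{1}{m}\sum_i \chi_F(x_i) = 1$ for $F = \set{x_1,\dots,x_m}$ while $\E_{x\sim\D}[\chi_F(x)] = 0$: the supremum of the deviation equals $1$ for every $m$. Any correct argument must therefore invoke a stronger notion of cover: a sup-norm cover (in which case your net argument goes through verbatim, since both the empirical and population approximation errors are bounded pointwise by $\eps$), or covers with respect to the random empirical measure combined with symmetrization---equivalently, the route the paper's own (admittedly terse) outline gestures at, namely McDiarmid's inequality to concentrate the supremum around its expectation followed by a symmetrization/empirical-entropy bound on that expectation, rather than a union bound over a population cover. (Read literally with the appendix's definition of $\norm{\cdot}_1$, the paper's statement is vulnerable to the same counterexample; its hedge is the sentence ``we essentially assume the sample complexity issues away by working with the notion of dimension.'') To fix your write-up, either strengthen the hypothesis to sup-norm covering numbers, or replace everything after your union-bound step with a symmetrization argument over an empirical cover.
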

\begin{proof}
The lemma follows from a standard uniform convergence argument.  First, observe that
because every $c:\X \to [-1,1]$ and $y \in \set{0,1}$
that the empirical estimate using $m$ samples has
sensitivity $1/m$.  Thus, we can apply McDiarmid's inequality to show concentration
of the following statistic.
\begin{equation}
\sup_{c \in \C}\card{\frac{1}{m}\sum_{i =1}^m\left(c(x_i)y(x_i)\right) - 
\E_{x \sim \X}[c(x)y(x)]}
\end{equation}
Then, using a standard covering argument,
for $N = {\cal N}(\eps,\C,\norm{\cdot}_1)$
the $\eps$-covering number, we can bound the
deviation with high probability.
Specifically, taking
$O\left(\frac{\log(N/\delta)}{\alpha^2}\right)$ samples guarantees
that the empirical estimate for each $c \in \C$
will be within $O(\alpha)$ with probability at least $1-\delta$.
Taking $\delta$ small enough to union bound against every iteration and
adjusting constants shows gives the lemma.
\end{proof}
Note that this analysis is completely generic,
and more sophisticated arguments may improve the
resulting bounds that leverage structure in the
specific $\C$ of interest.

\subsection{Convergence analysis}

We will track progress of Algorithm~\ref{alg} by tracking the expected cross-entropy
loss.  We show that every update makes the
expected cross-entropy loss decrease significantly.
As the loss is bounded below by $0$, then
positive progress at each iteration combined
with an upper bound on the initial loss gives the
convergence result.

Note that when we estimate the statistical queries from data,
we only have access to approximate answers.
Thus, per the sample complexity argument above,
we assume that each statistical query
is $\alpha/4$-accurate.  Further, we will update $f_t$ if we find an update $c_t$ where
$\langle c_t , f-y \rangle \ge 3\alpha/4$.
Thus, at convergence, it should be clear that the resulting
hypothesis will be $(\C,\alpha)$-multiaccurate.
The goal is to show that this way, \alg~converges quickly.
\begin{lemma}
Let $\alpha > 0$ and suppose
$\C \subseteq [-1,1]^\X$.
Given access to statistical queries that
are $\alpha/4$-accurate,
Algorithm~\ref{alg} converges to a
$(\C,\alpha)$-multiaccurate hypothesis in
$T = O\left(\frac{\ell_\D(f_0;y)}{\alpha^2}\right)$ iterations.
\end{lemma}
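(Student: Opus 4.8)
The plan is to use a potential-function argument with the expected cross-entropy loss $\ell_\D(f_t;y)$ as the potential. The key observation is that each multiplicative-weights update is precisely a step that should decrease this loss, since the cross-entropy loss is the natural ``matching'' loss for the logistic/multiplicative-weights geometry. I would first recall that the algorithm only performs an update at iteration $t$ when it has found a set $S^* \in \S$ and a function $h_{t,S^*}$ with empirical correlation at least $3\alpha/4$ with the residual $f_t - y$ restricted to $S^*$; by the $\alpha/4$-accuracy of the statistical queries, this guarantees the true correlation $\langle h_{t,S^*}, (f_t - y)_{S^*}\rangle \ge \alpha/2$. The goal is to convert this lower bound on correlation into a lower bound on the per-step decrease in $\ell_\D$.

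The main technical step is to prove a one-step progress lemma of the form
\begin{equation}
\ell_\D(f_t;y) - \ell_\D(f_{t+1};y) \ge \eta \cdot \langle h_{t,S^*}, (f_t - y)_{S^*}\rangle - O(\eta^2),
\end{equation}
where the update multiplies $f_t$ by $e^{-\eta h_{t,S^*}}$ on $S^*$ and renormalizes. To establish this I would write out the change in cross-entropy loss pointwise, $\ell_x(f_{t+1};y) - \ell_x(f_t;y)$, and expand it. Because $\frac{\partial \ell_x(f;y)}{\partial f(x)}$ is (up to the logistic normalization) proportional to the residual $f(x)-y(x)$, the first-order term in $\eta$ is exactly $\eta\,h_{t,S^*}(x)\,(f_t(x)-y(x))$, whose expectation is the correlation we have lower-bounded. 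The second-order term is controlled by the fact that $|h_{t,S^*}| \le 1$ and the loss is smooth on the relevant range, giving an $O(\eta^2)$ error. Choosing $\eta = \Theta(\alpha)$ then balances these terms so that the net decrease per update is $\Omega(\alpha^2)$.

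Given the per-step decrease of $\Omega(\alpha^2)$, the convergence bound follows immediately: the potential $\ell_\D(f_t;y)$ starts at $\ell_\D(f_0;y)$, is bounded below by $0$, and drops by $\Omega(\alpha^2)$ at each of the (at most $T$) updates, so the number of updates is at most $T = O\!\left(\ell_\D(f_0;y)/\alpha^2\right)$. At termination, every $S \in \S$ and in particular every $c \in \C$ audited on $\X$ has correlation below $\alpha$ with the residual, so $f_T$ is $(\C,\alpha)$-multiaccurate, as claimed.

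The step I expect to be the main obstacle is controlling the second-order (curvature) term in the progress lemma cleanly. The cross-entropy loss has unbounded second derivative as predictions approach $0$ or $1$, so a naive Taylor expansion does not give a uniform $O(\eta^2)$ bound. The resolution is to work directly with the logistic parameterization: the multiplicative update corresponds to an additive shift in the log-odds, and in that parameterization the relevant smoothness constant is bounded (the second derivative of the log-partition function is at most $1/4$). I would therefore phrase the progress lemma in terms of the change in the softmax/logistic potential rather than by differentiating $\ell_x$ in $f(x)$ directly, which sidesteps the blow-up and yields the desired $\eta\cdot\langle h, f-y\rangle - O(\eta^2)$ bound with an absolute constant. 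The renormalization over $S^*$ needs a brief check as well, but it only improves the bound, since projecting onto the normalized distribution cannot increase the loss.
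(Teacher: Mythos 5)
Your proposal is correct and follows essentially the same route as the paper: a potential argument on the expected cross-entropy, a one-step progress bound of the form $\eta\langle h, f_t - y\rangle - O(\eta^2)$ obtained by passing to the odds/log-odds parameterization (the paper writes $f_t = q_t/(1+q_t)$ and Taylor-expands $e^{-\eta c_t}$, which is exactly your ``bounded smoothness in the logistic parameterization'' fix for the curvature blow-up near $0$ and $1$), and the choice $\eta = \Theta(\alpha)$ giving an $\Omega(\alpha^2)$ drop per update. The only slight imprecision is your remark about renormalization ``projecting onto the normalized distribution'': the update is a pointwise renormalization of the odds, not a projection over $S^*$, but this is already subsumed by the odds-parameterization computation you describe.
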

We state this lemma in terms of a class
$\C$ but the proof reveals that any nontrivial update
that $\A$ returns suffices to make progress.
\begin{proof}
We begin by considering the effect of the multiplicative weights update as
a univariate update rule.
Suppose we use the multiplicative weights update rule
to compute $f_{t+1}(x)$ to be proportional to
$f_t(x) \cdot e^{-\eta c_t(x)}$ for some $c_t(x)$.
We can track how $\ell_x(f;y)$ changes based on the choice
of $c_t(x)$.
\begin{multline}
\ell_x(f_t;y) - \ell_x(f_{t+1};y)\\ =
y(x) \cdot \log\left(\frac{f_{t+1}(x)}{f_t(x)}\right)
+ (1-y(x))\cdot \log\left(\frac{1-f_{t+1}(x)}{1-f_t(x)}\right)\label{eqn:difference}
\end{multline}
Recall $f_t(x) = \frac{q_t(x)}{1+q_t(x)}$, so
$1-f_t(x) = \frac{1}{1+q_t(x)}$.  Thus, we can rewrite
(\ref{eqn:difference}) as follows.
\begin{align}
&y(x) \cdot \log\left(\frac{q_{t+1}(x)}{q_t(x)}\right)
+ (1-y(x))\cdot \log\left(\frac{1}{1}\right)
- \log\left(\frac{1+q_{t+1}(x)}{1+q_t(x)}\right)\\
&= -\eta c_t(x) y(x) + 0
- \log\left(\frac{1+q_{t+1}(x)}{1+q_t(x)}\right)\label{eqn:mw}
\end{align}
where (\ref{eqn:mw}) follows by the multiplicative weights
update rule implies $q_{t+1}(x) = e^{-\eta c_t(x)}q_t(x)$
for $x \in S_t$.
Next, we expand the final logarithmic term.
\begin{align}
- \log\left(\frac{1+q_{t+1}(x)}{1+q_t(x)}\right)
&= - \log\left(\frac{1+q_{t}(x)e^{-\eta c_t(x)}}{1+q_t(x)}\right)\\
&\ge - \log\left(\frac{1+q_{t}(x)(1-\eta c_t(x)+\eta^2 c_t(x)^2)}{1+q_t(x)}\right)\label{eqn:quadratic}\\
&\ge - \log\left(1 - \frac{q_{t}(x)}{1+q_t(x)}(\eta c_t(x)-\eta^2 c_t(x)^2)\right)\\
&\ge \eta c_t(x) f_t(x) - \eta^2c_t(x)^2\label{eqn:one}
\end{align}
where (\ref{eqn:quadratic}) follows by upper bounding the
Taylor series approximation for $e^z$ for $z\ge -1$; and
(\ref{eqn:one}) follows by the fact that $f_t(x) \in [0,1]$.
Combining the expressions, we can simplify as follows.
\begin{align}
(\ref{eqn:mw}) &\ge
-\eta c_t(x) y(x)
+ \eta c_t(x) f_t(x) - \eta^2 c_t(x)^2\\
&=\eta c_t(x) \cdot (f_t(x) - y(x)) - \eta^2c_t(x)^2
\label{eqn:linear}
\end{align}
Thus, we can express the change in $\ell_x(f_t;y)
- \ell_x(f_{t+1};y)$ after an
update based on $c_t(x)$ in terms of the inner product
between $c_t$ and $f-y$.  In this sense, we can express the local
progress during the update at time $t$ in terms of some
global progress in the objective.

When we update $x \in \X$ simultaneously according to $c$, we can express the
change in expected cross-entropy as follows.
\begin{align}
&\ell_\D(f_t;y) - \ell_\D(f_{t+1};y)\\
&\ge \eta \cdot\E_{x \sim \X}[c_t(x) \cdot (f_t(x) - y(x))] - \eta^2 \cdot \E_{x \sim \X}[c_t(x)^2]\\
&\ge \eta \langle c_t, f_t - y \rangle
- \eta^2\\
&\ge \eta (\alpha/2 - \eta)
\label{ineq:approximation}
\end{align}
where (\ref{ineq:approximation}) follows from
the fact that we assumed that our estimates
of the statistical queries were $\alpha/4$-accurate and that we update based on
$c_t$ if $\langle c_t,f-y \rangle$ is at least
$3\alpha/4$ according to our estimates.
Thus, taking $\eta = \alpha/4$, then we see
the change in expected cross-entropy over
$\X$ is at least $\alpha^2/16$, which shows the
lemma.
\end{proof}

\section{Linear convergence from gradient learning}

Here we show that given an auditing algorithm $\A$ that learns the cross-entropy gradients accurately, Algorithm~\ref{alg} converges linearly.  Consider the following auditor $\A_\ell$.
We assume the norms and inner products are estimated accurately using $D \sim \D^m$.

\begin{figure}[ht!]
{\refstepcounter{algorithm} \label{auditor}{\bf Algorithm~\thealgorithm:}} $\A_{\ell}$ -- smooth cross-entropy auditor 

\fbox{\parbox{\columnwidth}{
{\bf Given:}
\begin{itemize}
\item hypothesis $f:\X \to [0,1]$;
\item class of functions $\C \subseteq [-B,B]^\X$; accuracy parameter $\alpha > 0$;
\item smoothing parameter $L$;
\item validation data $D \sim \D^m$;
\end{itemize}

{\bf Let:}
\begin{itemize}
\item $\eps \gets \frac{\langle \grad_f\ell, f-y\rangle^2}{\norm{\grad \ell}^2\norm{f-y}^2}$
\hfill\texttt{// approx factor based on angle between grad and f-y}
\item $\H \gets \set{h \in \C : \norm{h}^2 \le L\cdot \ell(f;y)}$
\hfill\texttt{// audit over l2-bounded version of C}
\item $h_f \gets \textrm{argmin}_{h \in \H} \norm{h - \grad_f\ell(f;y)}^2$
\end{itemize}
if $\ell(f;y) \le \alpha$ or $\norm{h_f - \grad_f\ell(f;y)}^2 > \frac{\eps}{2} \cdot \norm{\grad_f \ell(f;y)}^2$:\\
\phantom{For }{\bf return} $h(x) = 0$
\hfill\texttt{// cross-entropy small or hf bad approx to deriv}\\
else:\\
\phantom{For }{\bf return} $h_f$
}}
\end{figure}

We claim that this auditor learns the partial derivative function in a way that guarantees linear convergence.
\begin{proposition*}[Restatement of Proposition\ref{prop:learning}]
Let $\alpha,B, L > 0$ and $\C \subseteq [-B,B]^\X$.
Suppose we run Algorithm~\ref{alg} on initial model
$f_0$ with auditor
$\A_\ell$ defined in Algorithm~\ref{auditor}.
Then, Algorithm~\ref{alg} converges in
$T = O\left(L \cdot \log(\ell_\D(f_0;y)/\alpha)\right)$
iterations.
\end{proposition*}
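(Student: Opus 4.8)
The plan is to treat Algorithm~\ref{alg} run with $\A_\ell$ as (approximate) gradient descent on the cross-entropy objective $\ell_\D(\cdot\,;y)$, and to show that each update drives $\ell_\D$ down by a constant multiplicative factor, so that reaching $\ell_\D \le \alpha$ takes only $O(L\log(\ell_\D(f_0;y)/\alpha))$ steps. The workhorse is the per-iteration descent inequality already derived in the convergence analysis of Appendix~\ref{app:mw}: for an update with direction $c_t$ and step $\eta$, one has $\ell_\D(f_t;y) - \ell_\D(f_{t+1};y) \ge \eta\,\langle c_t, f_t-y\rangle - \eta^2\norm{c_t}^2$. I would apply it with $c_t = h_f$, the function returned by $\A_\ell$, so the argument reduces to lower bounding $\langle h_f, f_t-y\rangle$ and upper bounding $\norm{h_f}^2$, both in terms of $\ell_\D(f_t;y)$ itself.

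Two structural facts about cross-entropy supply these bounds. First, a gradient-domination inequality: pointwise $(\grad_f\ell)(x)\cdot(f(x)-y(x)) \ge \ell_x(f;y)$. This follows from the elementary estimate $\frac{1-f}{f} \ge -\log f$ for $f\in(0,1)$ (and its mirror $\frac{f}{1-f}\ge -\log(1-f)$ for the $y=0$ case), which one checks via $g(f)=\frac{1-f}{f}+\log f$, noting $g(1)=0$ and $g'(f)=\frac{f-1}{f^2}\le 0$. Taking expectations gives $\langle \grad_f\ell,\, f-y\rangle \ge \ell_\D(f;y)$. Second, a self-bounding/smoothness estimate $\norm{\grad_f\ell}^2 \le L\cdot\ell_\D(f;y)$; this is exactly why $\H$ is defined with the budget $\norm{h}^2\le L\cdot\ell(f;y)$, which guarantees both that a good approximator of the gradient lies in $\H$ and that the returned $h_f$ obeys $\norm{h_f}^2\le L\cdot\ell_\D(f;y)$.

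Next I would transfer the gradient-domination bound to the learned direction $h_f$. The auditor outputs $h_f$ only when $\norm{h_f-\grad_f\ell}^2 \le \frac{\eps}{2}\norm{\grad_f\ell}^2$, where $\eps = \langle\grad_f\ell,\,f-y\rangle^2/\big(\norm{\grad_f\ell}^2\,\norm{f-y}^2\big)$ is the squared cosine of the angle between the gradient and the residual. Writing $\langle h_f, f-y\rangle = \langle\grad_f\ell,\,f-y\rangle + \langle h_f-\grad_f\ell,\,f-y\rangle$ and applying Cauchy--Schwarz with the approximation guarantee, the error term is at most $\sqrt{\eps/2}\,\norm{\grad_f\ell}\,\norm{f-y} = \frac{1}{\sqrt2}\langle\grad_f\ell,\,f-y\rangle$, by the definition of $\eps$ and the fact that the inner product is nonnegative. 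Hence $\langle h_f, f-y\rangle \ge (1-\tfrac{1}{\sqrt2})\langle\grad_f\ell,\,f-y\rangle \ge (1-\tfrac{1}{\sqrt2})\,\ell_\D(f;y)$. Plugging this and $\norm{h_f}^2\le L\,\ell_\D(f;y)$ into the descent inequality yields $\ell_\D(f_t;y)-\ell_\D(f_{t+1};y) \ge \big(\eta(1-\tfrac{1}{\sqrt2}) - \eta^2 L\big)\,\ell_\D(f_t;y)$.

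Finally, choosing $\eta = \Theta(1/L)$ (e.g. $\eta = (1-1/\sqrt2)/(2L)$) makes the bracket $\Omega(1/L)$, so $\ell_\D(f_{t+1};y) \le \big(1-\Omega(1/L)\big)\,\ell_\D(f_t;y)$. Unrolling gives $\ell_\D(f_T;y) \le e^{-\Omega(T/L)}\,\ell_\D(f_0;y)$, which falls below $\alpha$ once $T = O\big(L\log(\ell_\D(f_0;y)/\alpha)\big)$; at that point $\A_\ell$ returns the zero function and the algorithm halts. The only other halting branch is the bad-approximation test, which can only terminate the run earlier, so every iteration that actually updates makes the geometric progress above and the iteration count is bounded by $T$. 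I expect the main obstacle to be the two structural estimates under the smoothing of the loss: the raw cross-entropy gradient $\frac{1}{1-f-y}$ is unbounded as $f\to 1-y$, so $\norm{\grad_f\ell}^2\le L\,\ell_\D$ fails for the exact loss, and the quadratic approximation used when $\card{\grad_f\ell}>10$ is precisely what restores it (while preserving gradient-domination) with a finite smoothness constant $L$; checking that both properties survive the smoothing is the delicate part.
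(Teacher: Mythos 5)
Your proposal is correct and takes essentially the same route as the paper's own proof: the same multiplicative-weights descent inequality $\ell_\D(f_t;y)-\ell_\D(f_{t+1};y) \ge \eta\langle h_f, f_t-y\rangle - \eta^2\norm{h_f}^2$, the same decomposition $\langle h_f, f-y\rangle = \langle \grad_f\ell, f-y\rangle + \langle h_f-\grad_f\ell, f-y\rangle$ controlled by Cauchy--Schwarz plus the auditor's $\tfrac{\eps}{2}$-approximation test, the same norm budget $\norm{h_f}^2 \le L\cdot\ell_\D(f;y)$ from the definition of $\H$, and the same choice $\eta = \Theta(1/L)$ yielding geometric decay. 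The only differences are cosmetic --- the paper obtains $\langle \grad_f\ell, f-y\rangle \ge \ell_\D(f;y)$ from convexity of $\ell_\D$ rather than your pointwise estimate, and your direct bound (constant $1-1/\sqrt{2}$) is if anything cleaner than the paper's corresponding manipulation (constant $1/4$) --- while your closing worry about the smoothed gradient is immaterial for the stated bound, since both you and the paper use only the norm bound on the returned $h_f$ and let a failed approximation test terminate the run early.
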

\begin{proof}
Note that when $\A_\ell$ returns $h(x) = 0$, then Algorithm~\ref{alg}
terminates.  Thus, we will bound the
number of iterations until $\ell_\D(f;y)$ at most than $\alpha$.
For notational convenience, we denote
$\grad_f \ell_\D(f;y)$ as
$\grad_f \ell$.

By the definition of $\eps$ and the termination condition, we know that
if $\A_\ell$ returns $h_f(x) \neq 0$
then $h_f$ satisfies the following inequality.
\begin{align}
\norm{h_f - \grad_f \ell}^2 &\le \frac{1}{2}\cdot \frac{\langle \grad_f \ell, f-y \rangle^2}{\norm{f-y}^2}\\
&\le \frac{1}{2}\cdot \frac{\langle \grad_f \ell, f-y \rangle^2}{\norm{f-y}^2} + \frac{1}{16} \norm{\grad_f\ell}^2\\
&= \norm{\frac{\langle \grad_f\ell,f-y \rangle}{\norm{f-y}^2}(f-y) - \frac{\grad_f\ell}{4}}^2
\end{align}
Using this inequality, we can bound the inner product between $h_f$ and $f-y$.
\begin{align}
&\langle h_f, f-y \rangle\\ &=
\langle \grad_f \ell, f-y \rangle
+ \langle h_f - \grad_f \ell, f-y \rangle\\
&\ge \langle \grad_f \ell, f-y \rangle -
\norm{\frac{\langle \grad_f\ell,f-y \rangle}{\norm{f-y}^2}(f-y) -\frac{ \grad_f\ell}{4}} \cdot
\norm{f-y}\\
&\ge \langle \grad_f \ell, f-y \rangle
- \langle \grad_f \ell, f-y \rangle \cdot \frac{\norm{f-y}^2}{\norm{f-y}^2}
+ \frac{1}{4}\cdot \langle \grad_f\ell,f-y \rangle
\label{ineq:positive}\\
&\ge \frac{1}{4} \cdot \ell_\D(f;y)\label{ineq:linear:convex}
\end{align}
where (\ref{ineq:positive}) follows from the fact
that $\grad_f \ell$ and $f-y$ are positively
correlated; and (\ref{ineq:linear:convex}) follows
by convexity of $\ell_\D$.

Thus, using the analysis of the multiplicative weights update from Section~\ref{app:mw}, we
can see that the progress in cross-entropy can be bounded as
\begin{align}
\ell_\D(f_{t};y) - \ell_\D(f_{t+1};y) &\ge 
\frac{\eta}{4} \cdot \ell_\D(f_t;y) - \eta^2
\cdot \norm{h_{f_t}(x)}^2\\
&\ge (\frac{\eta}{4} - \eta^2 L )\cdot \ell_\D(f_t;y)\label{ineq:smoothness}
\end{align}
where (\ref{ineq:smoothness}) follows from the fact that $h_f$ is drawn
from a class with Euclidean norm bounded as $\norm{h_f}^2 \le L \cdot \ell_\D(f;y)$.

Rearranging and taking $\eta = \frac{1}{8L}$, we arrive at the following inequality that implies linear convergence.
\begin{align}
\ell_\D(f_{t+1};y) &\le (1-\frac{\eta}{4} + \eta^2 L)
\ell_\D(f_t;y)\\
&\le e^{-1/64L}\ell_\D(f_t;y)
\end{align}
Thus, after $O\left(L\cdot \log(\ell_\D(f_0;y)/\alpha)\right)$,
then the cross-entropy will drop below
$\alpha$.
\end{proof}

\end{document}